% SIAM Article Template
\documentclass[manuscript,onefignum,onetabnum]{siamart190516}

% Information that is shared between the article and the supplement
% (title and author information, macros, packages, etc.) goes into
% ex_shared.tex. If there is no supplement, this file can be included
% directly.

\usepackage{graphicx}
\usepackage{subfig}
\usepackage{amssymb}
\usepackage{xcolor}
\usepackage{soul}
\usepackage{amsmath}
\usepackage{lineno}
\usepackage{graphics}
\usepackage{caption}
\usepackage{lscape}
\usepackage{tabularx}
\usepackage[makeroom]{cancel}
\usepackage[english]{babel}
\usepackage{float}
\usepackage{amsfonts}
\usepackage{textcomp}
\usepackage[utf8]{inputenc}
\usepackage[english]{babel}
\usepackage{mathtools}
\usepackage{lipsum}
\usepackage{algorithm}
\usepackage{todonotes}
\usepackage{algorithm}
\usepackage{algorithmic}
%\newdefinition{definition}{Definition}

% Packages and macros go here
\usepackage{lipsum}
\usepackage{amsfonts}
\usepackage{graphicx}
\usepackage{epstopdf}
\usepackage{algorithmic}
\ifpdf
  \DeclareGraphicsExtensions{.eps,.pdf,.png,.jpg}
\else
  \DeclareGraphicsExtensions{.eps}
\fi

% Add a serial/Oxford comma by default.

% Used for creating new theorem and remark environments
\newsiamremark{remark}{Remark}
\newsiamremark{hypothesis}{Hypothesis}
\crefname{hypothesis}{Hypothesis}{Hypotheses}
\newsiamthm{claim}{Claim}
%Data-driven Grassmannian embedding geometry via diffusion maps

% Sets running headers as well as PDF title and authors
\headers{Grassmannian diffusion maps}{K. R. M. dos Santos, D. G. Giovanis, and M. D. Shields }

% Title. If the supplement option is on, then "Supplementary Material"
% is automatically inserted before the title.
\title{Grassmannian diffusion maps based dimension reduction and classification for high-dimensional data \thanks{Preprint submitted to SIAM Journal on Scientific Computing (SISC).}
%\funding{This work was funded by ...}}
}

% Authors: full names plus addresses.
\author{Ketson R. M. dos Santos\thanks{Department of Civil \& Systems Engineering, Johns Hopkins University, Baltimore, MD, 21218, USA.}
\and Dimitris G. Giovanis\footnotemark[2]
\and Michael D. Shields\footnotemark[2]
}

\usepackage{amsopn}

%\input{ex_shared}

% Optional PDF information
\ifpdf
\hypersetup{
  pdftitle={Grassmannian diffusion maps based classification for high-dimensional data},
  pdfauthor={K. R. M. dos Santos, D. G. Giovanis, and M. D. Shields}
}
\fi

% The next statement enables references to information in the
% supplement. See the xr-hyperref package for details.

%\externaldocument{ex_supplement}

% FundRef data to be entered by SIAM
%<funding-group specific-use="FundRef">
%<award-group>
%<funding-source>
%<named-content content-type="funder-name"> 
%</named-content> 
%<named-content content-type="funder-identifier"> 
%</named-content>
%</funding-source>
%<award-id> </award-id>
%</award-group>
%</funding-group>

\begin{document}

\maketitle

% REQUIRED
\begin{abstract}
% Diffusion Maps is a nonlinear dimensionality reduction technique used to embed high-dimensional data in a low-dimensional Euclidean space, where the notion of distance is due to the transition probability of a random walk over the dataset. However, the conventional approach is not capable to reveal the dataset underlying subspace structure, a useful information for machine learning applications such as object classification and face recognition. 
{This work introduces the Grassmannian Diffusion Maps, a novel nonlinear dimensionality reduction technique that defines the affinity between points through their representation as low-dimensional subspaces corresponding to points on the Grassmann manifold. The method is designed for applications, such as image recognition and data-based classification of constrained high-dimensional data where each data point itself is a high-dimensional object (i.e. a large matrix) that can be compactly represented in a lower dimensional subspace.  The GDMaps is composed of two stages. The first is a pointwise linear dimensionality reduction wherein each high-dimensional object is mapped onto the Grassmann manifold representing the low-dimensional subspace on which it resides. The second stage is a multi-point nonlinear kernel-based dimension reduction using Diffusion maps to identify the subspace structure of the points on the Grassmann manifold. To this aim, an appropriate Grassmannian kernel is used to construct the transition matrix of a random walk on a graph connecting points on the Grassmann manifold. Spectral analysis of the transition matrix yields low-dimensional Grassmannian diffusion coordinates embedding the data into a low-dimensional reproducing kernel Hilbert space. Further, a novel data classification/recognition technique is developed based on the construction of an overcomplete dictionary of reduced dimension whose atoms are given by the Grassmannian diffusion coordinates. Three examples are considered. First, a ``toy'' example shows that the GDMaps can identify an appropriate parametrization of structured points on the unit sphere. The second example demonstrates the ability of the GDMaps to reveal the intrinsic subspace structure of high-dimensional random field data. In the last example, a face recognition problem is solved considering face images subject to varying illumination conditions, changes in face expressions, and occurrence of occlusions. The technique presented high recognition rates (i.e., 95\% in the best-case) using a fraction of the data required by conventional methods.}

\end{abstract}

% REQUIRED
\begin{keywords}
  Grassmann manifold, diffusion maps, dimension reduction, data classification, face recognition
\end{keywords}

% REQUIRED
\begin{AMS}
  53Z50, 14M15, 60J20
\end{AMS}

\section{Introduction}
\label{S:1}
Dimensionality reduction techniques play a fundamental role in the interpretation and characterization of high-dimensional data in computationally intensive data-driven applications such as data compression \cite{baraniuk2010,gan2020}, data classification \cite{wu2020,yan2020}, uncertainty quantification \cite{giovanis2018,soize2020,giovanis2020,wang2020}, and biological sciences \cite{uh2016, moon2017, moon2018}, just to mention few of them. Although linear dimensionality reduction techniques (e.g., principal component analysis (PCA) \cite{jolliffe2011}) suffice to extract features of the elements of a dataset, they fail to capture the intrinsic nonlinear geometric structure of the dataset itself. To circumvent this limitation, kernel-based techniques have been successfully employed in several applications by taking advantage of the data similarity expressed on a connected graph on the data. Among these techniques one can include isometric mapping, also known as Isomap \cite{tenenbaum2000,balasubramanian2002}; local linear embedding (LLE) \cite{roweis2000, donoho2003}; T-distributed stochastic neighbor embedding (t-SNE) \cite{maaten2008}; kernel principal component analysis (Kernel PCA) \cite{scholkopf1998}; Laplacian eigenmaps \cite{belkin2003}; diffusion maps (DMaps) \cite{coifman2006}; and vector diffusion maps (VDMaps) \cite{singer2012}.

Existing nonlinear dimensionality reduction techniques consider that high-dimen-\\sional data lie on a low-dimensional manifold embedded in a Euclidean space. Therefore, to reveal this embedded low-dimensional structure, one can resort to kernel-based techniques such as diffusion maps \cite{coifman2006}; where the spectral decomposition of the transition matrix of a random walk performed on the data is used to determine a new set of coordinates embedding this manifold into a low-dimensional Euclidean space. But, for certain classes of problems, the manifold on which the data lie is not well-characterized in Euclidean space. Instead, certain data sets (particularly very high-dimensional data sets such as images and numerical simualtions with many degrees of freedom) have structure that is inherently constrained to lie on a submanifold of a Reimmannian manifold.

Recent works have begun to consider that nonlinear dimensionality techniques can benefit from incorporating metrics beyond conventional Euclidean distance metrics. For example, an extension of DMaps, known as vector diffusion maps (VDMaps) \cite{singer2012}, includes the information about orthogonal transformations of neighboring points into the definition of similarity. Although VDMaps uses orthogonal transformations to align the data, similarity is still based on the Euclidean distance. The authors of \cite{singer2012} suggested that the Grassmannian distance could be considered; however, they note that using the Grassmannian distance in a Gaussian kernel could yield a non positive semi-definite kernel matrix \cite{harandi2014}. In this work, we demonstrate how a Grassmannian metric can be employed in the DMaps to explore the intrinsic structure of data constrained to lie on a submanifold of the Grassmann manifold (i.e. a Grassmannian diffusion manifold).

In summary, the Grassmann manifold \cite{griffiths2011,ye2016schubert} is a collection of all subspaces (of a particular dimension) of a vector space. Of particular importance is that the Grassmann manifold is endowed with a metric. Structured data, such as those from images or those from physics-based models, are often constrained to a particular submanifold of the Grassmannian. That is, there are only certain structured subspaces on which the data can lie given the constraints on the data. Several applications using Grassmann manifold projections are found in the literature; including the works by Turaga et al. \cite{turaga2008}, where statistical inference on the Grassmann manifold is investigated for enhancing the performance of activity recognition, video-based face recognition, and shape classification techniques. Moreover, Harandi et al. \cite{harandi2015} used the Grassmann manifold theory to address problems in sparse coding and dictionary learning. Giovanis and Shields \cite{giovanis2018,giovanis2020} introduced a computationally efficient surrogate modeling scheme based on Grassmann manifold projections for prediction of high-dimensional stochastic models in an uncertainty quantification setting.

In general, topics related to manifold learning and data similarity have been addressed by several authors. In this regard, Lin and Zha \cite{lin2008} focused on the formulation of the dimensionality reduction problem as a Riemannian geometry problem. In particular, a Riemannian normal coordinate chart is implemented, and a simplicial reconstruction of the manifold is performed. Therefore, a map from the high-dimensional into a low-dimensional space is computed such that the radial geodesic distances are preserved. Fan et al. \cite{fan2012} introduced a semi-supervised manifold learning technique known as Multi-Manifold Discriminative Analysis (Multi-MDA) to explore the information encoded by the geodesic distance. Feng et al. \cite{feng2013} has developed a global linear algorithm for dimensionality reduction, known as Maximal Similarity Embedding (MSE), which utilized the cosine metric to capture the intrinsic geometry of neighbor data aiming at the maximization of the global similarity. Wu et al. \cite{wu2014} introduced the Neighboring Similarity Integration (NSI) for the analysis of image manifold under the probability preserving principle. Zhao et al. \cite{zhao2018} propose a technique, referred to as multi-view manifold learning with locality alignment (MVML-LA), to learn a low-dimensional space containing a relevant information about the input data. Their main motivation was based on the fact that current manifold learning methods cannot be easily applied to multiple feature sets.

Herein, a novel dimensionality reduction technique is introduced for revealing the underlying geometric structure of a dataset on a Reimannian submanifold. The technique has two steps, the first is a linear pointwise dimensionality reduction via projection of the data onto the Grassmann manifold. Next, a multipoint nonlinear dimensionality reduction is conducted based on spectral analysis of the transition probability matrix of a random walk on the Grassmann manifold via diffusion maps with an appropriate Grassmannian kernel. This novel technique is particularly suitable for the classification and dimensionality reduction of high-dimensional data whose structure is constrained on a Grassmann manifold (e.g., images, videos, numerical simulations). Therefore, this technique is developed to be complementary to the existing ones such as DMaps and VDMaps, where no pointwise projections onto the Grassmann manifold are considered and measures of affinity in Euclidean spaces are sufficiently revealing. Moreover, a novel data classification/recognition is introduced using elements of sparse representation and the construction of an overcomplete dictionary composed by the Grassmannian diffusion coordinates.

% This paper is organized as follows. In section \ref{S:2}, elements Grassmann manifold theory are introduced. The definitions of Stiefel and Grassmann manifolds are presented together with the definition of the tangent space and geodesic path, as well as definitions of the exponential and logarithmic maps. In section \ref{S:3}, the notion of distance is introduced based on the concept of principal angles between subspaces. Section \ref{S:4} introduces the Grassmannian kernels as a necessary tool for the development of the multipoint dimensionality reduction. In section \ref{S:5}, the Grassmannian diffusion maps (GDMaps) is introduced, and the characteristics of the obtained diffusion coordinates are discussed. Section \ref{S:6} introduces a novel data classification/recognition technique informed by the Grassmannian diffusion maps, where accurate classifications are performed with a reduced amount of data in comparison with conventional techniques. In section \ref{S:7}, three examples are presented to corroborate the arguments presented in sections \ref{S:5} and \ref{S:6}.

\section{Grassmannian Diffusion Maps}

The proposed Grassmannian Diffusion Maps (GDMaps) is a two-stage dimension reduction specifically developed for problems where each data ``point'' is a high-dimensional object (vector, matrix, tensor). Identifying the geometric structure spanning a dataset composed of such points poses distinct challenges, specifically because the traditional notion of distance (i.e. a Euclidean metric) between these high-dimensional objects is difficult to interpret and may not even have useful meaning in illuminating the concepts of proximity or similarity \cite{aggarwal2001surprising}. Understanding the distance between these high-dimensional objects requires a deeper geometric interpretation. This geometry is formalized through analysis of the high-dimensional data points on the Grassmann manifold. Hence, the first stage is to project the data onto the Grassmann manifold, details of which are provided in Section 3.

Datasets composed of high-dimensional data with low-rank structures are known to possess an intrinsic connected geometric structure on the Grassmannian \cite{wang2015,wang2019}. That is, the data reside on a submanifold, the low-dimensional characterization of which is the objective of this work. In other words, the Grassmann manifold onto which we project the data contains a vast expanse of points that are geometrically unrelated to the dataset, and therefore are of no interest. We aim only to characterize the portion of the Grassmann manifold that meaningfully spans the dataset. For this, we apply the diffusion maps, which embeds the submanifold into a Hilbert Space. The diffusion maps, however, require us to specify a notion of distance on the Grassmann manifold, and from it derive a valid kernel. These topics are specifically discussed in Sections 4 - 5. Finally, after establishing a Grassmannian kernel, a spectral decomposition of the transition probability of a random walk constructed on a connected graph over the points on the Grassmannian enables the embedding of the submanifold and provides a convenient low-dimensional representation of the data set that can be used to interpret, classify, and otherwise analyze the dataset. This embedding is discussed in detail in Section 6.

Again, it is important to emphasize that the proposed GDMaps represents a natural extension of the DMaps, and the generalized VDMaps, for problems where the dataset is composed of high-dimensional objects whose geometry is not well-characterized in Euclidean spaces. It is not meant as a replacement for the DMaps or VDMaps for problems, such as the classical swiss roll, where proximity on the manifold can be well-characterized using Euclidean metrics. 

For the sake of clarity in its use and implementation, we begin by introducing the GDMaps algorithm. We then detail the mathematical foundations upon which the approach rests.

\subsection{Grassmann manifold projection}
\label{S:2}
The first stage of the proposed dimension reduction is to project the data onto the Grassmann manifold. Practically speaking, this involves identifying the numerical basis on which the data lie. Consider the data matrix $\mathbf{X}_i \in \mathbb{R}^{n \times m}$ having rank $p$ (generally $p\ll n,m$). A number of techniques exist for identifying a basis for $\mathbf{X}_i$ including the singular value decomposition (SVD) and QR decomposition. The resulting space spanned by this basis is a point on the Grassmann manifold and represents the projection of the data onto the manifold. Here, the SVD is preferred because it provides bases for both the column space and the row space of $\mathbf{X}_i$, which allows for feature recognition in both dimensions of the array. 

More specifically, consider the compact SVD
\begin{equation}
    \mathbf{X}_i = \mathbf{\Psi}_i\mathbf{\Sigma}_i\mathbf{\Phi}_i^T
\end{equation}
where $\mathbf{\Psi}_i\in \mathbb{R}^{n \times p}$ are the left singular vectors, $\mathbf{\Phi}_i\in \mathbb{R}^{m \times p}$ are the right singular vectors, and $\mathbf{\Sigma}_i\in \mathbb{R}^{p \times p}$ is a diagonal matrix contain the $p$ singular values. The matrices $\mathbf{\Psi}_i$ and $\mathbf{\Phi}_i$ define points on the Stiefel manifolds $\mathcal{V}(p,n)$ and $\mathcal{V}(p,m)$, and provide a (non-unique) representation for the points $\mathcal{X}_i$ and $\mathcal{Y}_i$ on the Grassmann manifolds $\mathcal{G}(p,n)$ and $\mathcal{G}(p,m)$ defined as the spaces spanned by $\mathbf{\Psi}_i$ and $\mathbf{\Phi}_i$. These matrices serve as the projection of the $\mathbf{X}_i$ onto the Grassmann manifold, and represent the dimension reduced data on which we further operate. Definitions and properties of the Stiefel and Grassmann manifolds, including the essential operations on data on the manifold are presented in Section \ref{sec:Grassmann}. 

% \subsection{Diffusion Maps on the Grassmann manifold}

\subsection{Diffusion Maps: Discrete embedding of Grassmannian submanifold on Euclidean space}
\label{S:5.2}
Given a set $S_N = \{\mathbf{X}_i\}^N_{i=1}$ of i.i.d.\ random high-dimensional samples $\mathbf{X}_i \in \mathbb{R}^{n \times m}$ with probability distribution $f_x$, and their projections onto the Grassmann manifold $\mathcal{G}(p,n)$ given by the set $G_N = \{\mathcal{X}_i\}^N_{i=1}$, one can construct a graph on $\mathcal{G}(p,n)$ where the nodes are given by $G_N$ and the weights on the edges are determined by a positive semi-definite Grassmannian kernel $k: \mathcal{G}(p,n) \times \mathcal{G}(p,n) \rightarrow \mathbb{R}$. Essentials for defining and selecting a Grassmannian kernel are discussed in Sections \ref{S:4} and \ref{S:5} where we review the definitions of distances, metrics, and kernels on the Grassmannian. 

The transition matrix of a random walk $W_N = \left(S_N,f,\mathbf{P}\right)$ performed in this graph can be constructed from the kernel normalization. First, a diagonal degree matrix $\mathbf{D} \in \mathbb{R}^{N \times N}$ is constructed as follows
\begin{equation}\label{eq:5.21}
    D_{ii} = \sum_{j=1}^{N} k(\mathcal{X}_i,\mathcal{X}_j).
\end{equation}
\noindent
from which one can obtain the stationary distribution of the random walk as
\begin{equation}\label{eq:5.22}
    \pi_{i} = \frac{D_{ii}}{\sum_{k=1}^N D_{kk}}.
\end{equation}

Next, a set of coordinates embedding the points on $\mathcal{G}(p,n)$ into an Euclidean space is obtained from the spectral decomposition of the transition matrix $\mathbf{P}$ of random walk $W_N$. To obtain this transition matrix, the kernel matrix $k_{ij} = k(\mathcal{X}_i,\mathcal{X}_j)$ must be normalized as follows
\begin{equation}\label{eq:5.23}
    \kappa_{ij} = \frac{k_{ij}}{\sqrt{D_{ii}D_{jj}}},
\end{equation}
\noindent
and $P_{ij}$ is given by
\begin{equation}\label{eq:5.24}
    P_{ij} = \frac{\kappa_{ij}}{\sum_{k=1}^{N} \kappa_{ik}}.
\end{equation}
\noindent
From the eigendecomposition of $\mathbf{P}$, we retain the first $q$ eigenvectors $\left\{\psi_k\right\}^q_{k=0}$, with $\psi_k \in \mathbb{R}^N$, and their respective eigenvalues $\left\{\lambda_k\right\}^q_{k=0}$. The Grassmannian diffusion coordinates for the element $\mathbf{X}_j \in S_N$ are then given by 
\begin{equation}\label{eq:5.25}
    \mathbf{\Xi}_j = \left(\xi_{j0}, \dots, \xi_{jq} \right) = \left(\lambda_0 \psi_{j0}, \dots, \lambda_1 \psi_{jq} \right),
\end{equation}
\noindent
where $\psi_{jk}$ corresponds to the position $j$ of $\psi_k$. Moreover, the Grassmannian diffusion distance for the discrete embedding of the Grassmann manifold in the Euclidean space is given by
\begin{equation}\label{eq:5.25}
    \delta^{\mathcal{G}}_{ij} = ||P^t_{ik} - P^t_{jk}||_{L_2(D^{-1}_{ii})} = \Bigg\{\sum_{k=1}^N\left[P^t_{ik} - P^t_{jk}\right]^2 \frac{1}{D_{kk}} \Bigg\}^{\frac{1}{2}},
\end{equation}
\noindent
which can be interpreted as
\begin{equation}\label{eq:5.26}
    \delta^{\mathcal{G}}_{ij} = \Bigg\{\sum_{k=1}^N\lambda_k^{2t}\left[\psi_{ik} - \psi_{jk}\right]^2 \Bigg\}^{\frac{1}{2}}.
\end{equation}
One can easily see that the Grassmannian diffusion distance can be determined as the Euclidean distance between the Grassmannian diffusion coordinates as given by
\begin{equation}\label{eq:gr_dist}
    \delta^{\mathcal{G}}_{ij} = \|\mathbf{\Xi}_i - \mathbf{\Xi}_j\|^2.
\end{equation}

The mechanization of the Grassmannian diffusion maps is presented later in Algorithm \ref{alg:5.1}.

\begin{algorithm}[h]
\caption{Grassmannian Diffusion Maps (GDMaps)}
\label{alg:5.1}
\begin{algorithmic}[1]
\REQUIRE a set of $N$ high-dimensional data $S_N = \left\{ \mathbf{X}_1, \dots \mathbf{X}_N\right\}$ with $\mathbf{X}_i \in \mathbb{R}^{n \times m}$, and the dimension $p$ of the Grassmann manifold.
\FOR{$i \in 1, \dots, N$}
\STATE Compute the compact Singular Value Decomposition: $\mathbf{X}_i = \mathbf{\Psi}_i\mathbf{\Sigma}_i\mathbf{\Phi}_i^T$, where $\mathbf{\Psi}_i \in \mathcal{V}(p,n)$ and $\mathbf{\Phi}_i \in \mathcal{V}(p,m)$.
\ENDFOR
\STATE For every pair $\left[\mathbf{\Psi}_i,\mathbf{\Psi}_j\right]$ and $\left[\mathbf{\Phi}_i,\mathbf{\Phi}_j\right]$ compute the entries $k_{ij}$ of the kernel matrices $k_{ij}\left(\mathbf{\Psi}\right)$ and $k_{ij}\left(\mathbf{\Phi}\right)$, either e.g.\ Eq. (\ref{eq:4.2}) or Eq. (\ref{eq:4.4}).
\STATE If necessary, compute a composite kernel matrix $k\left(\mathbf{\Psi},\mathbf{\Phi}\right)$. For example:\\
\vspace{0.2cm}
$k\left(\mathbf{\Psi},\mathbf{\Phi}\right) = k_{ij}\left(\mathbf{\Psi}\right)+k_{ij}\left(\mathbf{\Phi}\right)$ or $k\left(\mathbf{\Psi},\mathbf{\Phi}\right) = k_{ij}\left(\mathbf{\Psi}\right) \circ k_{ij}\left(\mathbf{\Phi}\right)$, where $\circ$ is the Hadamard product.
\vspace{0.2cm}
\STATE Compute degree matrix $\mathbf{D} \in \mathbb{R}^{N \times N}$ using $k\left(\mathbf{\Psi},\mathbf{\Phi}\right)$ as in Eq. (\ref{eq:5.21}).
\STATE Compute the normalized kernel matrices using Eq. (\ref{eq:5.23}) for $k\left(\mathbf{\Psi},\mathbf{\Phi}\right)$.
\STATE Estimate the transition matrix $\mathbf{P}$ using Eq. (\ref{eq:5.24}).
\STATE Obtain the eigenvectors and their respective eigenvalues from the eigendecomposition of $\mathbf{P}$, and determine the truncation index $q$.
\ENSURE diffusion coordinates $\mathbf{\Xi}_1, \dots, \mathbf{\Xi}_N$, with $\mathbf{\Xi}_i \in \mathbb{R}^{q}$.
\end{algorithmic}
\end{algorithm}

\section{Grassmann manifold: Definitions}
\label{sec:Grassmann}
The elements of differential geometry discussed in this section are essential for the development of the Grassmannian diffusion maps technique. In particular, the high-dimensional data considered herein are represented by matrices (data in vector or tensor form can be reshaped to matrix from), and their projection on the Grassmann manifold are represented by their underlying subspace structure. In this regard, to formalize the representation of a point on the Grassmann manifold it is necessary to define both the Stiefel and the Grassmann manifolds.

Consider the ambient space $\mathbb{R}^n$, a $p$-plane is the subspace of dimension $p$ with $0 < p < n$, and a $p$-frame is an ordered set of $p$ mutually orthonormal vectors in $\mathbb{R}^n$ \cite{auslander2012,ye2016schubert}. The Stiefel manifold $\mathcal{V}(p,n)$, which is induced by the orthogonal group $O(n)$ of $n\times n$ orthogonal matrices, is defined as follows.
\begin{definition}\label{def:2.1}
The Stiefel manifold $\mathcal{V}(p,n)$ is the set of $p$-frames in $\mathbb{R}^n$ such that $\mathcal{V}(p,n) = \{\mathbf{\Psi} \in \mathbb{R}^{n \times p}: \mathbf{\Psi}^\intercal\mathbf{\Psi} = \mathbf{I}_p\}$.
\end{definition}
\noindent
where $\mathbf{I}_p \in \mathbb{R}^{p \times p}$ is the identity matrix and $\mathbf{\Psi} \in \mathbb{R}^{n \times p}$ is an orthonormal matrix. Moreover, $\mathcal{V}(p,n)$ is a compact manifold with dimension given by $\mathrm{dim}\left[\mathcal{V}(p,n)\right] = np - \frac{1}{2}p(p+1)$ \cite{ye2016schubert}. Furthermore, the Stiefel manifold is a homogeneous space represented as a quotient space \cite{ye2016schubert,ye2019}, such that
\begin{equation}\label{eq:2.1}
    \mathcal{V}(p,n) \cong \frac{O(n)}{O(n-p)}.
\end{equation}

The right action of $O(p)$ on $\mathcal{V}(p,n)$ induces a homogeneous space with dimension $p(n-p)$ known as the Grassmann manifold \cite{ye2016schubert}, whose definition is given next.

\begin{definition}\label{def:2.2}
The Grassmann manifold (or Grassmannian) $\mathcal{G}(p,n)$ is a set of $p$-planes in $\mathbb{R}^n$ where a point on $\mathcal{G}(p,n)$ is given by $\mathcal{X} = \mathrm{span}\left(\mathbf{\Psi}\right)$ with $\mathbf{\Psi} \in \mathcal{V}(p,n)$.
\end{definition}

\noindent
Moreover, $\mathcal{X}$ is identified as an equivalence class of $n \times p$ matrices under orthogonal transformation of the Stiefel manifold \cite{ye2016schubert,ye2019}, such that.
\begin{equation}\label{eq:2.2}
    \mathcal{G}(p,n) \cong \frac{O(n)}{O(n-p)O(p)} = \frac{\mathcal{V}(p,n)}{O(p)}.
\end{equation}

For a better comprehension about the nature of both manifolds, if $p=1$ the Grassmann manifold $\mathcal{G}(1,n)$ is a generalization of the projective space $\mathbb{P}^{n-1}$ corresponding to the set of lines passing through the origin of the Euclidean space \cite{harandi2014}. Further, a point $\mathcal{X} = \mathrm{span}\left(\mathbf{\Psi}\right) \in \mathcal{G}(p,n)$ is invariant under the right action $\mathbf{\Psi R}\in \mathcal{V}(p,n)$  such that $\mathrm{span}\left(\mathbf{\Psi}\right) = \mathrm{span}\left(\mathbf{\Psi R}\right)$, where $\mathbf{R} \in O(p)$ is a $p\times p$ orthogonal matrix \cite{ye2016schubert}. Hence, $\mathbf{\Psi}$ and $\mathbf{\Psi R}$ designate the same point on the Grassmann manifold which is an equivalence class represented by either point on the Stiefel manifold. Practically speaking, this means that any matrix $\mathbf{X} \in \mathbb{R}^{n \times m}$ can be projected onto a Grassmann manifold $\mathcal{G}(p,n)$ through an orthonormalization process where only $p$ directions are kept and that the point on $\mathcal{G}(p,n)$ is unique despite the non-uniqueness of the orthonormalization process.

% Therefore, a representation of a point on the Grassmann manifold is given by an orthonormal matrix $\mathbf{\Psi} \in \mathbb{R}^{n \times p}$. 

% In fact, this orthonormal matrix is a point on $\mathcal{V}(p,n)$ (basis) representing a point on $\mathcal{G}(p,n)$ (subspace).

\subsection{Grassmann manifold: Tangent space and geodesic path}
\label{S:2.1}
Due to the smoothness of $\mathcal{G}(p,n)$, one can define tangent vectors at a given point $\mathcal{X} \in \mathcal{G}(p,n)$ as an equivalence class of differentiable curves $\gamma(t)$ passing through $\mathcal{X}$. In this regard, a tangent space $\mathcal{T}_{\mathcal{X}}\mathcal{G}(p,n)$ is defined as follows \cite{maruskin2012,wang2015,sommer2020}. 
\begin{definition}
The tangent space $\mathcal{T}_{\mathcal{X}}\mathcal{G}(p,n)$ is the set of all tangent vectors in $\mathcal{X}$, such that
\begin{equation}\label{eq:2.3}
    \mathcal{T}_{\mathcal{X}}\mathcal{G}(p,n) = \{\mathbf{\Gamma} \in \mathbb{R}^{n \times p} : \mathbf{\Gamma}^T\mathbf{\Psi}=\mathbf{0}\}.
\end{equation}
\end{definition}
The tangent space $\mathcal{T}_{\mathcal{X}}\mathcal{G}(p,n)$ at any point $\mathcal{X}$ is a flat inner-product space, which means that vectors can be defined connecting points in the tangent space, while such vectors cannot be defined directly on the manifold itself. Moreover, any such vector connecting points in the tangent space can be projected onto the manifold onto a curve known as the geodesic, $\gamma(t)$ on $\mathcal{G}(p,n)$, defined as follows.
% Moreover, the geodesic curve $\gamma(t)$ on $\mathcal{G}(p,n)$ has the following definition
\begin{definition}\label{def:2.3}
The geodesic curve $\gamma(t): I \rightarrow \mathcal{G}(p,n)$, is a differentiable curve on $\mathcal{G}(p,n)$ that is locally length-minimizing  with respect to a Riemannian metric. 
\end{definition}
\noindent
where $I$ is an interval in $\mathbb{R}$. Further, the vectors tangent to $\gamma(t)$ are covariantly constant, where $\nabla_{\dot{\gamma}}\dot{\gamma}(t)=0$ \cite{zimmermann2019}. In simpler terms, the geodesic is the shortest path along the manifold between two points $\mathcal{X}_0$ and $\mathcal{X}_1$.

To develop a map from the manifold to the tangent space and vice-versa, let's first restrict the interval $I \in \mathbb{R}$ to $I = [0, t]$, yielding a geodesic segment joining $\gamma(0)$ to $\gamma(t)$. Therefore, using the Einstein summation convention and the Levi-Civita connection, one can make the covariant derivative of $\dot{\gamma}$ equal to zero. Thus, one can obtain the following geodesic equation \cite{hartman1950,sommer2020}
\begin{equation}\label{eq:2.5}
    \ddot{\gamma}^\lambda + \Gamma^{\lambda}_{\mu \nu}\dot{\gamma}^\mu\dot{\gamma}^\nu=0,
\end{equation}
\noindent
where $\Gamma^{\lambda}_{\mu \nu}$ are the Christoffel symbols. Due to the local existence and uniqueness theorem for geodesics, one can say that given a point $\mathcal{X}_0 \in \mathcal{G}(p,n)$ and a vector $\dot{\mathcal{X}}_0 \in \mathcal{T}_{\mathcal{X}_0}\mathcal{G}(p,n)$, the geodesic exists and it is unique, such that $\gamma(0) = \mathcal{X}_0$ and $\dot{\gamma}(0) = \dot{\mathcal{X}}_0$. This can be proven using the theory of ordinary differential equations \cite{hartman1950,oprea2007}.

\subsection{Grassmann manifold: Exponential and logarithmic maps}
\label{S:2.2}
Considering that the Grassmann manifold $\mathcal{G}(p,n)$ is connected and complete as a metric space, one can define an exponential map $\mathrm{exp}_\mathcal{X}:\mathcal{T}_{\mathcal{X}}\mathcal{G}(p,n) \rightarrow \mathcal{G}(p,n)$ from the tangent space to every point $\mathcal{X} \in \mathcal{G}(p,n)$. Consider two points $\mathcal{X}_0 = \mathrm{span}(\mathbf{\Psi}_0)$ and $\mathcal{X}_1 = \mathrm{span}(\mathbf{\Psi}_1)$ in $\mathcal{G}(p,n)$, and a tangent space $\mathcal{T}_{\mathcal{X}_0}\mathcal{G}(p,n)$ with $\mathbf{\Gamma} \in \mathcal{T}_{\mathcal{X}_0}\mathcal{G}(p,n)$. One can map $\mathbf{\Gamma}$ to $\gamma(1) = \mathcal{X}_1$, where $\gamma(0) = \mathcal{X}_0$ and $\dot{\gamma}(0) = \dot{\mathcal{X}}_0$ (see Fig. \ref{fig:2}) \cite{giovanis2018,sommer2020}. As $\mathcal{X}_1$ is represented by the orthonormal matrix $\mathbf{\Psi}_1$, one can write the exponential map as
\begin{equation}\label{eq:2.6}
    \mathrm{exp}_{\mathcal{X}_0}(\mathbf{\Gamma}) = \mathbf{\Psi}_1.
\end{equation}
\begin{figure}[tbhp]
	\centering
	\captionsetup{justification=centering}
	\includegraphics[scale=0.08]{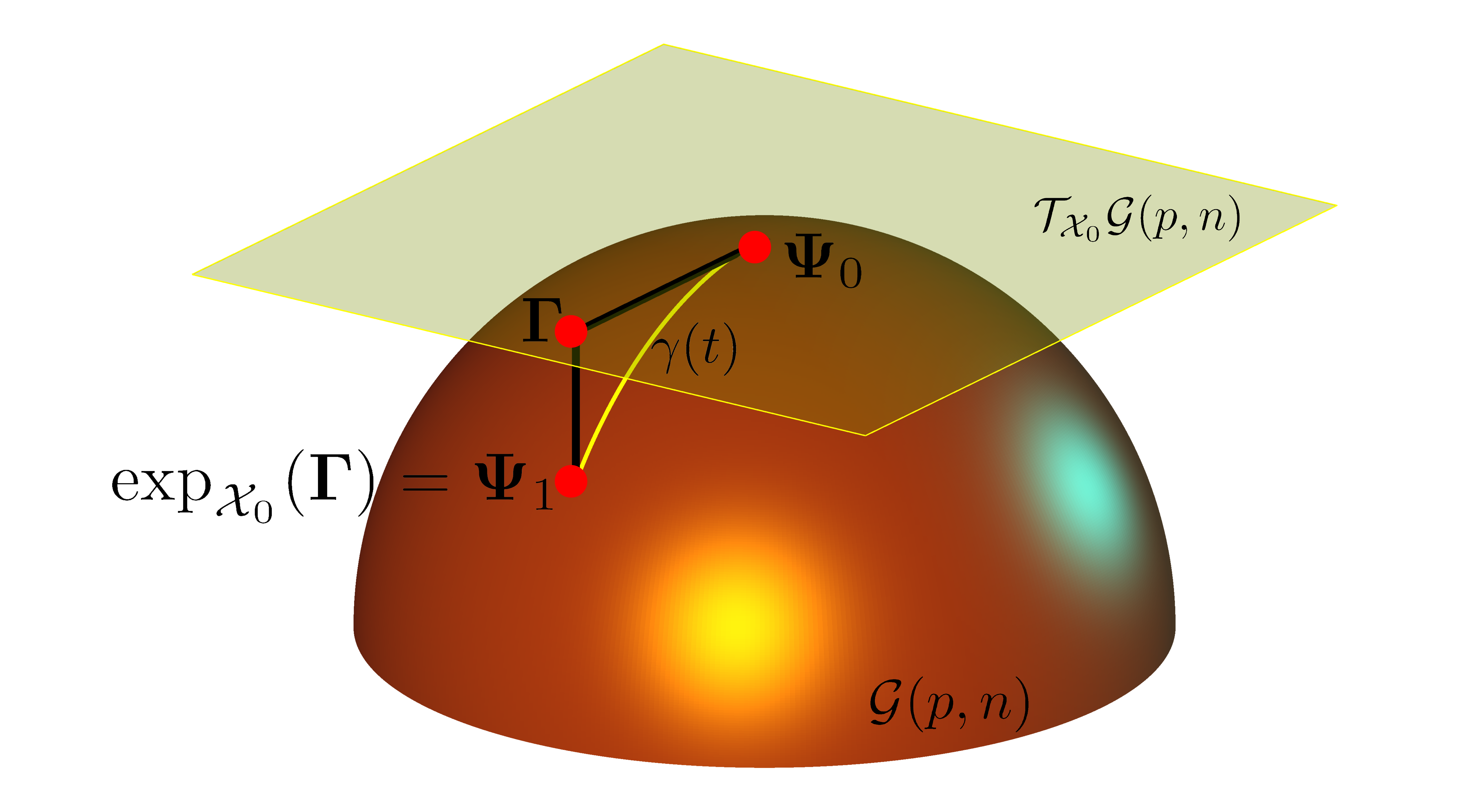}
	\vspace{-0.5em}
	\caption{Exponential map of a point $\mathbf{\Gamma} \in \mathcal{T}_{\mathcal{X}}\mathcal{G}(1,3)$ to the Grassmann manifold $\mathcal{G}(1,3)$.}
	\vspace{-0.5em}
	\label{fig:2}
\end{figure}

\noindent
Expressing $\mathbf{\Gamma}$ by its thin singular value decomposition $\mathbf{\Gamma} = \mathbf{U}\mathbf{S}\mathbf{V}^T$, one can obtain the following expression after some mathematical manipulation \cite{begelfor2006}.
\begin{equation}\label{eq:2.10}
    \mathbf{U}\mathrm{tan}\left(\mathbf{S}\right)\mathbf{V}^T = \left(\mathbf{\Psi}_1 -  \mathbf{\Psi}_0\mathbf{\Psi}_0^T\mathbf{\Psi}_1\right)\left(\mathbf{\Psi}_0^T\mathbf{\Psi}_1\right)^{-1}.
\end{equation}
\noindent
Defining the matrix $\mathbf{M}=\left(\mathbf{\Psi}_1-\mathbf{\Psi}_0\mathbf{\Psi}_0^T\mathbf{\Psi}_1\right)\left(\mathbf{\Psi}_0^T\mathbf{\Psi}_1\right)^{-1}$ and its thin SVD by $\mathbf{M}=\mathbf{U}\mathbf{S}\mathbf{V}^T$, the logarithmic map $\mathrm{log}_\mathcal{X}:\mathcal{G}(p,n) \rightarrow \mathcal{T}_{\mathcal{X}}\mathcal{G}(p,n)$, which is only invertible in the area close to $\mathcal{X}_0$ \cite{sommer2020}, is given by
\begin{equation}\label{eq:2.11}
    \mathrm{log}_\mathcal{X}(\mathbf{\Psi}_1) = \mathbf{U}\mathrm{tan}^{-1}\left(\mathbf{S}\right)\mathbf{V}^T.
\end{equation}

The geodesic $\gamma(t)$ parameterizes the curve connecting $\mathbf{\Psi}_0$ and $\mathbf{\Psi}_1$ on $t\in[0,1]$ with $\gamma(0) = \mathbf{\Psi}_0$ and $\gamma(1)=\mathbf{\Psi}_1$ such that the their respective projections in the tangent space are connected by a straight line. It can thus be represented by the projection of this line in the tangent space through the exponential mapping, where $\mathbf{\Gamma}$ is expressed by its thin singular value decomposition $\mathbf{\Gamma} = \mathbf{U}\mathbf{S}\mathbf{V}^T$ \cite{begelfor2006}. Therefore, one can write $\gamma(t)$ as
\begin{equation}\label{eq:2.12}
    \gamma(t)=\mathrm{span}\left[\left(\mathbf{\Psi}_0\mathbf{V}\mathrm{cos}(t\mathbf{S})+\mathbf{U}\mathrm{sin}(t\mathbf{S})\right)\mathbf{V}^T\right].
\end{equation}

\subsection{Grassmann manifold: Distances and metrics}
\label{S:4}
Points on the Grassmann manifold are intrinsically connected by smooth curves along which one can define a proper notion of dissimilarity given by the distance. This quantity is estimated from the principal angles between points on the Grassmann manifold, defined as follows.

\begin{definition}\label{def:3.1}
Considering $\mathbf{u}_i \in \mathrm{span}\left(\mathbf{\Psi}_u\right)$ and $\mathbf{v}_i \in \mathrm{span}\left(\mathbf{\Psi}_v\right)$ on $\mathcal{G}(k,n)$ and $\mathcal{G}(l,n)$, respectively, and letting $p = \mathrm{min}(k,l)$; the principal angles $0 \le \theta_1 \le \theta_2 \le \dots \le \theta_p \le \pi/2$ are recursively obtained from $\mathrm{cos}(\theta_i) = \underset{\mathbf{u}_i}{\mathrm{max}}\ \underset{\mathbf{v}_i}{\mathrm{max}}\ \mathbf{u}_i^T \mathbf{v}_i$ where $\mathbf{u}_i$ and $\mathbf{v}_i$ are orthonormal vectors.
\end{definition}

\noindent
The cosine of the principal angles $\theta_i \in \left[0, \pi/2\right]$ between the subspaces $\mathrm{span}(\mathbf{\Psi}_u)$ and $\mathrm{span}(\mathbf{\Psi}_v)$ can be computed from the singular values of $\mathbf{\Psi}_u^T\mathbf{\Psi}_v$ as:
\begin{equation}\label{eq:3.1}
    \mathbf{\Psi}_u^T\mathbf{\Psi}_v = \mathbf{U}\mathbf{S}\mathbf{V}^T,
\end{equation}
\noindent
where $\mathbf{U} \in O(k)$, $\mathbf{V} \in O(l)$, $\mathbf{S} = \mathrm{diag}(\sigma_1, \sigma_2, \dots, \sigma_p)$, with $p = \mathrm{min}(k,l)$, and the principal angles $\theta_i = \mathrm{cos}^{-1}(\sigma_i)$ \cite{miao1992}. In fact, it has been shown that any measure of distance on the Grassmann manifold must be a function of the principal angles as stated in the following theorem (repeated from \cite{wong1967,ye2016schubert}).
\begin{theorem}
Any notion of distance between $k$-dimensional subspaces in $\mathbb{R}^n$ that depends only on the relative positions of the subspaces, i.e., invariant under any rotation in $O(n)$, must be a function of their principal angles. To be more specific, if a distance $d:\mathcal{G}(k,n) \times \mathcal{G}(k,n) \to [0, \infty)$ satisfies $d(Q \cdot \boldsymbol{\Psi}_0, Q \cdot \boldsymbol{\Psi}_1) = d(\boldsymbol{\Psi}_0, \boldsymbol{\Psi}_1)$ for all $d(\boldsymbol{\Psi}_0, \boldsymbol{\Psi}_1) \in \mathcal{G}(k,n)$ and all $Q \in O(n)$, where $Q \cdot \boldsymbol{\Psi}_0 := \mathrm{span}(Q\boldsymbol{\Psi}_0) \in \mathcal{G}(k,n)$, then $d$ must be a function of $\theta_i(\boldsymbol{\Psi}_0, \boldsymbol{\Psi}_1), i = 1,...,k$.
\end{theorem}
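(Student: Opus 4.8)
The plan is to recognize that the claim is fundamentally a statement about the orbits of the $O(n)$-action on pairs of subspaces: the invariance hypothesis says precisely that $d$ is constant on each orbit of the diagonal action of $O(n)$ on $\mathcal{G}(k,n) \times \mathcal{G}(k,n)$, so it suffices to show that these orbits are in bijection with the admissible tuples of principal angles. Accordingly, I would reduce the theorem to the single geometric fact that two pairs $(\boldsymbol{\Psi}_0, \boldsymbol{\Psi}_1)$ and $(\boldsymbol{\Psi}_0', \boldsymbol{\Psi}_1')$ lie in the same $O(n)$-orbit if and only if they share the same principal angles $\theta_1 \le \dots \le \theta_k$. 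Granting this, any $Q \in O(n)$ satisfying $d(Q\cdot\boldsymbol{\Psi}_0, Q\cdot\boldsymbol{\Psi}_1) = d(\boldsymbol{\Psi}_0, \boldsymbol{\Psi}_1)$ forces $d$ to take a common value on the whole orbit, and since the orbit is labelled exactly by the angles, $d$ descends to a function of $(\theta_1, \dots, \theta_k)$ alone.

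One direction is immediate and I would dispatch it first. The cosines of the principal angles are the singular values of $\boldsymbol{\Psi}_u^\intercal \boldsymbol{\Psi}_v$ by~\eqref{eq:3.1}, and for any $Q \in O(n)$ we have $(Q\boldsymbol{\Psi}_u)^\intercal (Q\boldsymbol{\Psi}_v) = \boldsymbol{\Psi}_u^\intercal Q^\intercal Q \boldsymbol{\Psi}_v = \boldsymbol{\Psi}_u^\intercal \boldsymbol{\Psi}_v$. Hence the matrix whose singular values define the angles is untouched by $Q$, so the principal angles are $O(n)$-invariant. This shows that two pairs in a common orbit necessarily have identical angles.

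The substance of the argument is the converse, which I would establish by constructing a canonical orthonormal basis of $\mathbb{R}^n$ adapted to a pair of subspaces. Writing $\boldsymbol{\Psi}_u^\intercal \boldsymbol{\Psi}_v = \mathbf{\bar{U}}\mathbf{\bar{S}}\mathbf{\bar{V}}^\intercal$, I would pass to the principal vectors, i.e., the columns $\mathbf{u}_i$ of $\boldsymbol{\Psi}_u\mathbf{\bar{U}}$ and $\mathbf{v}_i$ of $\boldsymbol{\Psi}_v\mathbf{\bar{V}}$, which satisfy $\langle \mathbf{u}_i, \mathbf{v}_j\rangle = \cos\theta_i\, \delta_{ij}$. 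For every index with $\theta_i > 0$ I would set $\mathbf{w}_i = (\mathbf{v}_i - \cos\theta_i\,\mathbf{u}_i)/\sin\theta_i$; a short computation using the relation above shows that the $\mathbf{u}_i$ together with the $\mathbf{w}_i$ form an orthonormal system (with $\mathbf{u}_i = \mathbf{v}_i$ whenever $\theta_i = 0$), and that in this system $\mathbf{v}_i = \cos\theta_i\,\mathbf{u}_i + \sin\theta_i\,\mathbf{w}_i$. Extending to a full orthonormal basis of $\mathbb{R}^n$, both subspaces acquire a description depending only on the list $\theta_1, \dots, \theta_k$: $\mathrm{span}(\boldsymbol{\Psi}_u) = \mathrm{span}(\mathbf{u}_1, \dots, \mathbf{u}_k)$ and $\mathrm{span}(\boldsymbol{\Psi}_v)$ is spanned by the vectors $\cos\theta_i\,\mathbf{u}_i + \sin\theta_i\,\mathbf{w}_i$. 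Given two pairs with the same angles, the orthogonal map sending one canonical basis to the other lies in $O(n)$ and carries the first pair to the second, placing them in a common orbit.

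The main obstacle I anticipate is the careful bookkeeping in the degenerate cases of the canonical form: repeated principal angles, where $\mathbf{\bar{U}}$ and $\mathbf{\bar{V}}$ are no longer unique, and the extreme values $\theta_i = 0$ (forcing $\mathbf{u}_i = \mathbf{v}_i$, so no companion $\mathbf{w}_i$ is produced) and $\theta_i = \pi/2$ (where $\mathbf{u}_i \perp \mathbf{v}_i$). I would need to verify that the orthonormal system $\{\mathbf{u}_i\} \cup \{\mathbf{w}_i : \theta_i > 0\}$ has cardinality $2k - \dim(\mathrm{span}(\boldsymbol{\Psi}_u)\cap\mathrm{span}(\boldsymbol{\Psi}_v)) \le n$, so that the extension to a basis of $\mathbb{R}^n$ is legitimate, and that the resulting coordinate description is genuinely independent of the SVD ambiguities. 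Once the canonical form is seen to be well defined and complete, the orbit characterization, and hence the theorem, follows.
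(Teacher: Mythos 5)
The paper does not actually prove this theorem; it is quoted verbatim from the cited references (Wong 1967, Ye and Lim), so there is no in-paper proof to compare against. Your argument is the standard one from that literature and it is correct: the invariance hypothesis makes $d$ constant on orbits of the diagonal $O(n)$-action on $\mathcal{G}(k,n)\times\mathcal{G}(k,n)$, and the orbit of a pair is determined exactly by its principal angles, which you establish via the canonical (Jordan/CS-decomposition) basis $\{\mathbf{u}_i\}\cup\{\mathbf{w}_i:\theta_i>0\}$ with $\mathbf{v}_i=\cos\theta_i\,\mathbf{u}_i+\sin\theta_i\,\mathbf{w}_i$. The easy direction is fine since $(Q\boldsymbol{\Psi}_u)^\intercal(Q\boldsymbol{\Psi}_v)=\boldsymbol{\Psi}_u^\intercal\boldsymbol{\Psi}_v$ and the singular values of this matrix are also unchanged under change of orthonormal representative $\boldsymbol{\Psi}\mapsto\boldsymbol{\Psi}R$, $R\in O(k)$, so the angles are well defined on the Grassmannian and $O(n)$-invariant. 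Two of the obstacles you flag are non-issues: the cardinality bound $2k-\dim(\mathrm{span}(\boldsymbol{\Psi}_u)\cap\mathrm{span}(\boldsymbol{\Psi}_v))\le n$ is automatic once you have verified the system is orthonormal in $\mathbb{R}^n$, and the SVD ambiguity under repeated angles is harmless because you only need the \emph{existence} of one $Q\in O(n)$ matching any one adapted basis of the first pair to any one adapted basis of the second (with angles sorted consistently), not a canonically unique normal form.
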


% \subsection{Distance definitions}
% \label{S:3.1}
Perhaps the most common distance, the geodesic (or arc-length) distance $d_{\mathcal{G}(p,n)}\left(\mathbf{\Psi}_0,\mathbf{\Psi}_1\right)$ between two points $\mathcal{X}_0 = \mathrm{span}\left(\mathbf{\Psi}_0\right)$ and $\mathcal{X}_1 = \mathrm{span}\left(\mathbf{\Psi}_1\right)$ on $\mathcal{G}(p,n)$, corresponds to the distance over the geodesic $\gamma(t)$ parameterized by $t \in [0, 1]$, and it is given by \cite{wong1967,ye2016schubert,giovanis2018} 
\begin{equation}\label{eq:3.2}
    d_{\mathcal{G}(p,n)}\left(\mathbf{\Psi}_0,\mathbf{\Psi}_1\right) = \|\mathbf{\Theta}\|_2,
\end{equation}
\noindent
where $\mathbf{\Theta} = \left(\theta_1, \theta_2, \dots, \theta_p \right)$ is the vector of principal angles. This notion of distance, using the arc-length metric, between two subspaces in $\mathcal{G}(2,3)$ is represented on the unit semi-sphere in Fig. \ref{fig:3}. Several definitions of distance/metrics on $\mathcal{G}(p,n)$ can be found in the literature (see \cite{ye2016schubert} for  detailed information) and are listed in Table \ref{tab:1}.
\begin{figure}[tbhp]
	\centering
	\captionsetup{justification=centering}
	\includegraphics[scale=0.08]{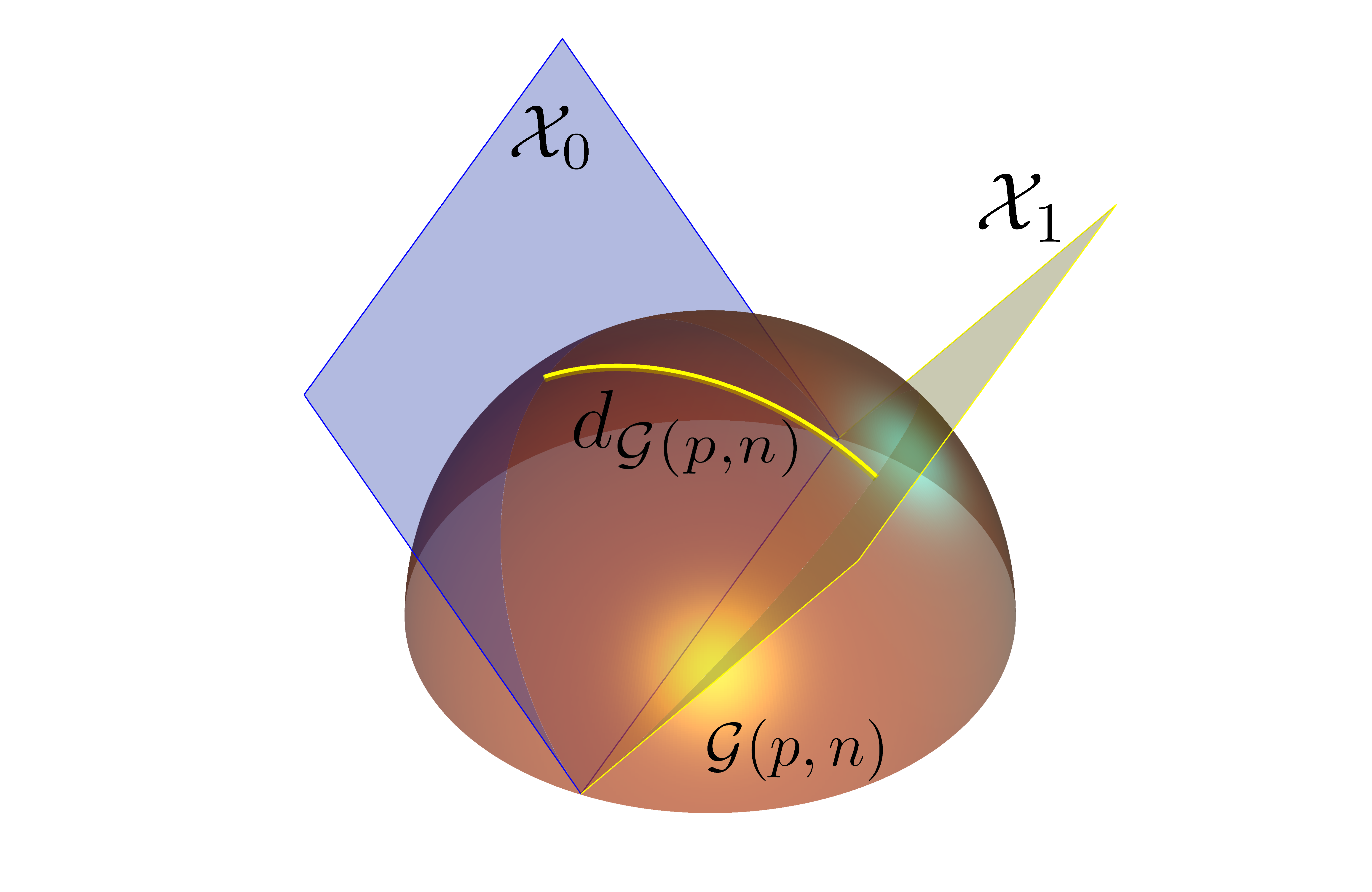}
	\vspace{-0.5em}
	\caption{Geodesic distance between subspaces $\mathcal{X}_0$ and $\mathcal{X}_1$ of $\mathbb{R}^2$ on $\mathcal{G}(2,3)$.}
	\vspace{-0.5em}
	\label{fig:3}
\end{figure}

\begin{table}[H]
\caption{Distances/metrics on the Grassmann manifold}
\label{tab:1}
\centering
\begin{tabular}{lcc}
\multicolumn{1}{c}{} & \textit{Principal angles} & \textit{Orthonormal basis}  \\
Asimov               &  $\theta_p$      &  $\mathrm{cos}^{-1}\|\mathbf{\Psi}_0^T\mathbf{\Psi}_1\|_2$  \\
Binet-Cauchy         & $\left(1-\prod^p_{i=1}\mathrm{cos}^2\theta_i\right)^{1/2}$           & $\left[1-\left(\mathrm{det}\mathbf{\Psi}_0^T\mathbf{\Psi}_1\right)^2 \right]^{1/2}$  \\
Arc-length              &  $\left(\sum^p_{i=1}\theta^2_i\right)^{1/2}$  & $\|\mathrm{cos}^{-1}\mathbf{\Sigma}\|_F$                  \\
Chordal              &  $\left(\sum^p_{i=1}\mathrm{sin}^2\theta_i\right)^{1/2}$  & $\frac{1}{\sqrt{2}}\|\mathbf{\Psi}_0\mathbf{\Psi}^T_0 - \mathbf{\Psi}_1\mathbf{\Psi}^T_1\|_F$\\
Procrustes           & $\left(2\sum_{i=1}^{p}\mathrm{sin}^2\frac{\theta_i}{2}\right)^{1/2}$ & $\|\mathbf{\Psi}_0 \mathbf{U} - \mathbf{\Psi}_1 \mathbf{V} \|_F$ \\
Projection           & $\mathrm{sin}\theta_p$   & $\|\mathbf{\Psi}_0\mathbf{\Psi}^T_0 - \mathbf{\Psi}_1\mathbf{\Psi}^T_1 \|_2$ \\
Spectral             & $2\mathrm{sin}\frac{\theta_p}{2}$ & $\|\mathbf{\Psi}_0 \mathbf{U} - \mathbf{\Psi}_1 \mathbf{V} \|_2$
\end{tabular}
\end{table}

When defining a distance, it is often important that the distance be provided in terms of a metric, which formalizes the notion of distance on the Grassmann manifold, defined as follows. 

\begin{definition}
A metric on the Grassmann manifold is a function $d:\mathcal{G}(k,n) \times \mathcal{G}(k,n) \to [0, \infty)$ where 
$[0,\infty )$ is the set of non-negative real numbers and for all $\mathcal{X}_0 = \mathrm{span}\left(\mathbf{\Psi}_0\right), \mathcal{X}_1 = \mathrm{span}\left(\mathbf{\Psi}_1\right),\mathcal{X}_2 = \mathrm{span}\left(\mathbf{\Psi}_2\right)  \in \mathcal{G}(k,n)$, the following three conditions are satisfied: 
\begin{enumerate}
    \item $d(\mathcal{X}_0,\mathcal{X}_1)=0 \Leftrightarrow \mathcal{X}_0 = \mathcal{X}_1 $ (identity of indiscernibles)
    \item $d(\mathcal{X}_0,\mathcal{X}_1) = d(\mathcal{X}_1,\mathcal{X}_0)$ (symmetry) 
    \item $d(\mathcal{X}_0,\mathcal{X}_1) \le d(\mathcal{X}_0,\mathcal{X}_2) + d(\mathcal{X}_2,\mathcal{X}_1)$ (triangle inequality)
\end{enumerate}
\end{definition}

The formalism of a metric on the manifold is useful for the definition of distance. It can be seen, for example, by observing that $d(\mathcal{X}_0,\mathcal{X}_1) = \sin \theta_1$, also known as max correlation distance or spectral distance \cite{ye2019}, is not a metric on the Grassmann manifold because it can be zero for a pair of distinct subspaces. Clearly this is undesirable.

\section{Grassmannian kernels}
\label{S:5}
% \textcolor{red}{Grassmannian kernels are important for the development of the Grassmannian diffusion maps (see Section \ref{S:5}) because they embed the Grassmann manifold into a reproducing kernel Hilbert space (RKHS). Moreover, they encode the notion of similarity, which is maximized when the dissimilarity (distance) is zero. In this section, a positive semi-definite kernel is defined as}

The notion of a metric on the Grassmann manifold is useful as an assessment tool to evaluate the proximity of points (subspaces) on the manifold. However, it is not sufficient as a tool for achieving an additional dimension reduction on a collection of points on a manifold as is necessary, for example, for clustering and classification of points on the manifold. This additional dimension reduction requires embedding the points on the manifold into a higher-dimensional “feature” space (the so-called “kernel trick” common to many nonlinear dimension reduction methods). For this, a kernel is required. Defining a Grassmannian kernel enables the Grassmann manifold to be embedded in a reproducing kernel Hilbert space (RKHS) that encodes the notion of similarity. Here, we begin by defining Grassmannian kernels, which follows from the following general definition of a kernel.

\begin{definition}\label{def:4.1}
A real symmetric map is a real-valued positive semi-definite kernel if $\sum_{i,j}c_i c_j k(x_i, x_j) \leq 0$, with $x \in \mathcal{X}$ and $c_i \in \mathbb{R}$.
\end{definition}
Consequently, one can define a Grassmannian kernel as

\begin{definition}\label{def:4.2}
A map $k: \mathcal{G}(p,n) \times \mathcal{G}(p,n) \rightarrow \mathbb{R}$ is a Grassmannian kernel if it is invariant to the choice of basis and positive semi-definite.
\end{definition}

In kernel-based dimensionality reduction techniques, there are many commonly used kernels defined on Euclidean spaces. The Gaussian kernel is perhaps the most popular and is given by,
\begin{equation}\label{eq:4.1}
    k(\mathbf{X}_i,\mathbf{X}_j) = \mathrm{exp}\left( -\frac{||\mathbf{X}_i - \mathbf{X}_j||^2_2}{4 \varepsilon}\right),
\end{equation}
where $\mathbf{X}_i$ and $\mathbf{X}_j$ are data points in the ambient Euclidean space, and $\varepsilon$ is the length-scale parameter. It could be tempting to simply substitute the Euclidean norm in the definition of the Gaussian kernel by a metric on the Grassmann manifold. However, this procedure yields a non positive semi-definite, although symmetric, kernel \cite{harandi2014}. 

Several families of Grassmannian kernels, with different characteristics, are proposed in the literature (see \cite{hamm2008, hamm2009, harandi2014}); however, the most popular positive semi-definite kernels are the Binet-Cauchy and Projection kernels corresponding to the Pl{\"u}cker and projection embeddings, respectively as discussed next. 

% \subsection{Embedding the Grassmann manifold}
% \label{S:4.1}
% \textcolor{red}{The construction of both Binet-Cauchy and projection kernels are discussed in this section based on the Pl{\"u}cker and projection embeddings, respectively. The analysis of the Binet-Cauchy kernel is included herein as a mean to propitiate a rigorous choice of the most appropriate kernel for the applications presented in this paper.}

\subsection{Binet-Cauchy kernel: Pl{\"u}cker embedding}
\label{S:4.1.1}
The construction of the Binet-Cauchy kernel is based on the Pl{\"u}cker embedding $P: \mathcal{G}(p,n) \rightarrow \mathbb{P}\left(\bigwedge^{\raisebox{-0.4ex}{\scriptsize $p$}} \mathbb{R}^{n}\right)$. It embeds the Grassmann manifold $\mathcal{G}(p,n)$ into the projective space $\mathbb{P}\left(\bigwedge^{\raisebox{-0.4ex}{\scriptsize $p$}} \mathbb{R}^{n}\right)$, where the exterior product $\bigwedge^{\raisebox{-0.4ex}{\scriptsize $p$}} \mathbf{V}$ is the k-$th$ product of a vector space $\mathbf{V}$. This property is used to determine an inner product over $\mathbb{P}\left(\bigwedge^{\raisebox{-0.4ex}{\scriptsize $p$}} \mathbb{R}^{n}\right)$ using the $q$-th compound matrix $C_q(\mathbf{\Psi})$ of a matrix $\mathbf{\Psi}$, which are the minors of $\mathbf{\Psi}$ of order $q$ arranged in a lexicographic order \cite{harandi2014}. Next, considering the Binet-Cauchy theorem \cite{vishwanathan2004,harandi2014}, and the points $\mathcal{X}_i = \mathrm{span}\left(\mathbf{\Psi}_i\right) \in \mathcal{G}(n,k)$ and $\mathcal{X}_j = \mathrm{span}\left(\mathbf{\Psi}_j\right) \in \mathcal{G}(n,l)$, one can use the $q$-th compound matrix $C_q(\mathbf{\Psi}_i^T\mathbf{\Psi}_j) = C_q(\mathbf{\Psi}_i)^T C_q(\mathbf{\Psi}_j)$ to determine the inner product for the Pl{\"u}cker embedding as $k_{bc}(\mathbf{\Psi}_i,\mathbf{\Psi}_j) = \mathrm{Tr}\left[C_q(\mathbf{\Psi}_i)^T C_q(\mathbf{\Psi}_j)\right] = \mathrm{Tr}\left[C_q(\mathbf{\Psi}_i^T\mathbf{\Psi}_j)\right] = \mathrm{det}\left(\mathbf{\Psi}_i^T\mathbf{\Psi}_j\right)$. However, the sign of $\mathrm{det}(\cdot)$ can change when columns of $\mathbf{\Psi}_0$ are permuted, a problem circumvented by taking the square value of $\mathrm{det}\left(\mathbf{\Psi}_i^T\mathbf{\Psi}_j\right)$. Thus, one can write $k_{bc}(\mathbf{\Psi}_i,\mathbf{\Psi}_j) = \mathrm{det}\left(\mathbf{\Psi}_i^T\mathbf{\Psi}_j\right)^2$. In fact, several families of kernels are constructed based on the Binet-Cauchy theorem. It includes the polynomial generalizations $k_{bc}(\mathbf{\Psi}_i,\mathbf{\Psi}_j) = \left[\beta + \mathrm{det}\left(\mathbf{\Psi}_i^T\mathbf{\Psi}_j\right)\right]^\alpha$ (see \cite{harandi2014} for a detailed description). Herein, the adopted definition of the Binet-Cauchy kernel is given by 
\begin{equation}\label{eq:4.2}
    k_{bc}(\mathbf{\Psi}_i,\mathbf{\Psi}_j) = \mathrm{det}\left(\mathbf{\Psi}_i^T\mathbf{\Psi}_j\right)^2,
\end{equation}
\noindent
whose relation with the principal angles is given by \cite{hamm2008,harandi2014}
\begin{equation}\label{eq:4.3}
    k_{bc}(\mathbf{\Psi}_i,\mathbf{\Psi}_j) = \prod_{k=1}^{p}\mathrm{cos}^2(\theta_k).
\end{equation}

\subsection{Projection kernel: projection embedding}
\label{S:4.1.2}
The projection kernel is defined straightforwardly using the projection embedding $\Pi: \mathcal{G}(p,n) \rightarrow \mathbb{R}^{n \times n}$, which is given by $\Pi \left(\mathbf{\Psi}\right) = \mathbf{\Psi}\mathbf{\Psi}^T$. This map corresponds to a diffeomorphism, a differentiable mapping with a continuous differentiable inverse, from the Grassmann manifold to the set of rank $p$ symmetric orthogonal projection matrices. Therefore, a natural definition of inner product for the projection embedding is given by $\langle \mathbf{\Psi}_i, \mathbf{\Psi}_j \rangle_{\Pi} = \mathrm{Tr}\left[\Pi \left(\mathbf{\Psi}_i\right)^T \Pi\left(\mathbf{\Psi}_j\right) \right] = ||\mathbf{\Psi}_i^T\mathbf{\Psi}_j||_F^2$. As for the Binet-Cauchy kernel, several families can be obtained using the projection embedding, however, herein the projection kernel is defined as
\begin{equation}\label{eq:4.4}
    k_{pr}(\mathbf{\Psi}_i,\mathbf{\Psi}_j) = ||\mathbf{\Psi}_i^T\mathbf{\Psi}_j||_F^2,
\end{equation}
\noindent
whose relation with the principal angles is given by \cite{hamm2008,harandi2014}
\begin{equation}\label{eq:4.5}
    k_{pr}(\mathbf{\Psi}_i,\mathbf{\Psi}_j) = \sum_{k=1}^{p}\mathrm{cos}^2(\theta_k).
\end{equation}

\subsection{Kernel selection}
Selecting an appropriate kernel requires a detailed analysis of the effect of the dimensionality of the Grassmann manifold on the magnitude of the kernel matrix entries. This analysis is presented for the projection and Binet-Cauchy kernels in Appendix \ref{a:grassmannian_kernel_dimensionality}.  The lemmas introduced in Appendix \ref{a:grassmannian_kernel_dimensionality} are important for the selection of the most appropriate kernel because it is demonstrated that as $p$ increases up to $n/2$, the Binet-Cauchy kernel tends to the identity matrix, which is undesirable for the applications in this paper. On the contrary, the off-diagonal entries of projection kernel grow like $p^2$, making it less sensitive to changes on the dimensionality of the Grassmann manifold. Thus, the projection kernel is employed throughout this work. 

\section{Grassmannian diffusion maps based data classification}
\label{S:6}

In this section, a novel data classification technique considering an overcomplete dictionary of the Grassmannian diffusion coordinates is presented. This technique uses the sparse representation to determine a linear and compact combination of the elements in a dictionary of testing data. It has the advantage that the data's intrinsic geometry is explicitly considered when points on the Grassmann manifold are embedded into a low-dimensional Euclidean space. Moreover, the dimensionality of the obtained linear system is reduced in comparison to conventional approaches \cite{wright2009}. Therefore, one can try to find an appropriate sparse representation of a given object by solving a convex optimization problem in the $l_0$-norm sense; however, this problem is NP-hard. On the other hand, solving it in the $l_2$-norm sense may not yield a sparse solution. To circumvent this limitation, the optimization problem is relaxed and solved in the $l_1$-norm sense \cite{candes2008,wright2009}.

Consider a test sample $\mathbf{X}_T$ and a training set $T_N = \left\{ \mathbf{X}_i\right\}^N_{i=1}$, where each sample in $T_N$ is assigned to a class $k$. We begin by constructing an extended dataset $T_{N+1} = \left\{ \mathbf{X}_1, \dots, \mathbf{X}_N, \mathbf{X}_T\right\}$. Using GDMaps on $T_{N+1}$, we obtain the Grassmannian diffusion coordinates $\mathbf{\Xi}_{i}$ corresponding to the elements $\mathbf{X}_i$. Next, the diffusion coordinates of each element in the $k$-th class are arranged as the columns of a matrix $\mathbf{A}_k \doteq \left[\mathbf{\Xi}_{k,1}, \dots, \mathbf{\Xi}_{k,N_k} \right] \in \mathbb{R}^{q \times N_k}$, where $N_k$ is the number of elements in class $k$, and $q$ is the truncation index used to define the dimension of the diffusion space. This approach assumes that any test sample given by its diffusion coordinates $\mathbf{\Xi}_T$ will lie in the linear span of the training samples in a given class such that
\begin{equation}\label{eq:6.1}
    \mathbf{\Xi}_T  = c_{k,1}\mathbf{\Xi}_{k,1} + \dots + c_{k,N_k}\mathbf{\Xi}_{k,N_k},
\end{equation}
\noindent
where $c_{k,j} \in \mathbb{R}$ with $j=1, \dots, N_k$. Therefore, the dictionary $\mathbf{A}$ can be constructed by concatenating the matrices $\mathbf{A}_k$, such that
\begin{equation}\label{eq:6.2}
    \mathbf{A} \doteq \left[\mathbf{A}_1, \dots, \mathbf{A}_{N_k} \right] \in \mathbb{R}^{q \times N}.
\end{equation}
\noindent
yielding the following underdetermined ($N > q$) linear system:
\begin{equation}\label{eq:6.3}
    \mathbf{\Xi}_T = \mathbf{A}\mathbf{c},
\end{equation}
\noindent
where the constant vector $\mathbf{c} = [0, \dots, 0, c_{k,1}, \dots, c_{k,N_k}, 0, \dots, 0]^T \in \mathbb{R}^{N}$ has non-zero entries in the positions associated with the $k$-th class. It is evident that, in general, no unique solution, $\hat{\mathbf{c}}$, exists for this underdetermined system; however, if $\hat{\mathbf{c}}$ is sufficiently sparse, the exact solution can be recovered with high probability \cite{wright2009}. Moreover, real-life data can be corrupted with noise; thus, we seek an approximate solution of the linear system in Eq. (\ref{eq:6.3}) by solving the following optimization problem
\begin{equation}\label{eq:6.5}
    \hat{\mathbf{c}} = \mathrm{arg}\:\mathrm{min}||\mathbf{c}||_1 \quad \text{subject to} \quad ||\mathbf{A}\mathbf{c} - \mathbf{\Xi}_T||^2_2 \leq \epsilon,
\end{equation}
\noindent
or its unconstrained form,
\begin{equation}\label{eq:6.6}
    \hat{\mathbf{c}} = \mathrm{arg}\:\mathrm{min}  ||\mathbf{A}\mathbf{c} - \mathbf{\Xi}_T||^2_2 + \beta ||\mathbf{c}||_1,
\end{equation}
\noindent
where $\epsilon$ is the error tolerance and $\beta$ is the regularization constant. With the vector of coefficients $\hat{\mathbf{c}}$, one can perform the classification either by assigning the maximum entry of $\hat{\mathbf{c}}$ to its associated class or by identifying the class yielding the smallest error in the approximation \cite{wright2009}. Clearly, different methods can be employed when performing this task. We propose to use the subspace structure in the Grassmannian diffusion space to compute a residual as
\begin{equation}\label{eq:6.7}
    r(k) = ||\mathbf{A}\left(\mathbf{I}^{*}_{k} \circ \hat{\mathbf{c}}_k\right) - \mathbf{\Xi}_T||_2.
\end{equation}
\noindent
The classification is performed by finding $k$ that minimizes $r(k)$, where the entries of $\mathbf{I}^{*}_{k} \in \mathbb{R}^{q}$ corresponding to the $k$-th class are equal to one and the rest are zero, and $\circ$ is the Hadamard product. The mechanization of this classification technique is presented in Algorithm \ref{alg:6.1}.

\begin{algorithm}[h]
\caption{GDMaps based data classification using sparse representation}
\label{alg:6.1}
\begin{algorithmic}[1]
\REQUIRE a set of $N$ high-dimensional data $T_N = \left\{ \mathbf{X}_1, \dots \mathbf{X}_N\right\}$ with $\mathbf{X}_i \in \mathbb{R}^{n \times m}$, the dimension $p$ of the Grassmann manifold, and a test sample $\mathbf{X}_T \in \mathbb{R}^{n \times m}$.
\FOR{$i \in 1, \dots, N$}
\STATE Compute the thin Singular Value Decomposition: $\mathbf{X}_i = \mathbf{\Psi}_i\mathbf{\Sigma}_i\mathbf{\Phi}_i^T$; where, $\mathbf{\Psi}_i \in \mathcal{V}(p,n)$ and $\mathbf{\Phi}_i \in \mathcal{V}(p,m)$.
\ENDFOR
\STATE Compute the thin Singular Value Decomposition: $\mathbf{X}_T = \mathbf{\Psi}_T\mathbf{\Sigma}_T\mathbf{\Phi}_T^T$; where, $\mathbf{\Psi}_T \in \mathcal{V}(p,n)$ and $\mathbf{\Phi}_T \in \mathcal{V}(p,m)$.
\STATE For the augmented sets $\mathbf{\Psi} = \left\{\mathbf{\Psi}_1, \dots, \mathbf{\Psi}_N, \mathbf{\Psi}_T\right\}$ and $\mathbf{\Phi} = \left\{\mathbf{\Phi}_1, \dots, \mathbf{\Phi}_N, \mathbf{\Phi}_T\right\}$ compute the entries $k_{ij}$ of the kernel matrices $k_{ij}\left(\mathbf{\Psi}\right)$ and $k_{ij}\left(\mathbf{\Phi}\right)$ using either Eq. (\ref{eq:4.2}) or Eq. (\ref{eq:4.4}).
\STATE If necessary, compute the composite kernel matrix $k\left(\mathbf{\Psi},\mathbf{\Phi}\right)$. For example:\\
\vspace{0.2cm}
$k\left(\mathbf{\Psi},\mathbf{\Phi}\right) = k_{ij}\left(\mathbf{\Psi}\right)+k_{ij}\left(\mathbf{\Phi}\right)$ or $k\left(\mathbf{\Psi},\mathbf{\Phi}\right) = k_{ij}\left(\mathbf{\Psi}\right) \circ k_{ij}\left(\mathbf{\Phi}\right)$, where $\circ$ is the Hadamard product.
\vspace{0.2cm}
\STATE Compute the diffusion coordinates $\mathbf{\Xi}_1, \dots, \mathbf{\Xi}_N, \mathbf{\Xi}_T$ as in Algorithm \ref{alg:5.1}, with $\mathbf{\Xi}_i \in \mathbb{R}^{q}$.
\STATE Concatenate the $\mathbf{\Xi}_i$'s of each class $k$ as columns of a matrix $\mathbf{A}_k \in \mathbb{R}^{q \times N_k}$.
\STATE Create the matrix of training diffusion coordinates $\mathbf{A} = \left[\mathbf{A}_1, \dots, \mathbf{A}_{N_k} \right] \in \mathbb{R}^{q \times N}$
\STATE Normalize the columns of $\mathbf{A}$ and the test sample $\mathbf{\Xi}_T$ to have unit $l_2$-norm.
\STATE Solve the convex optimization problem:\\
\vspace{0.2cm}
$\hat{\mathbf{c}} = \mathrm{arg}\:\mathrm{min}||\mathbf{c}||_1 \quad \text{subject to} \quad ||\mathbf{A}\mathbf{c} - \mathbf{\Xi}_T||^2_2 \leq \epsilon$\\
\vspace{0.2cm}
or alternatively\\
\vspace{0.2cm}
$\hat{\mathbf{c}} = \mathrm{arg}\:\mathrm{min}  ||\mathbf{A}\mathbf{c} - \mathbf{\Xi}_T||^2_2 + \beta ||\mathbf{c}||_1$
\vspace{0.2cm}
\STATE Compute the residuals for each class $k$:
\vspace{0.2cm}
$r(k) = ||\mathbf{A}\left(\mathbf{I}^{*}_{k} \circ \hat{\mathbf{c}}_k\right)- \mathbf{\Xi}_T||_2$
\vspace{0.2cm}
\ENSURE $k\left(\mathbf{X}_T\right) = \mathrm{arg}\:\underset{k}{\mathrm{min}}\ r(k)$
\end{algorithmic}
\end{algorithm}

\section{Examples}
\label{S:7}
In this section, three problems are considered to assess the performance of the Grassmannian diffusion maps in revealing the intrinsic geometric structure of a dataset for classification purposes. Herein, the projection kernel (Eq. (\ref{eq:4.4})) is adopted because it is not as sensitive as the Binet-Cauchy kernel to changes in the dimensionality of the Grassmann manifold, as demonstrated in the Appendix \ref{a:grassmannian_kernel_dimensionality}. In the ensuing analysis, both the conventional and Grassmannian diffusion maps are considered for comparison. The algorithms of both techniques are implemented in the UQpy software \cite{olivier2020uqpy}(Uncertainty Quantification with python: \url{https://uqpyproject.readthedocs.io/en/latest/}), a general purpose Python toolbox for modeling uncertainty in physical and mathematical systems.

\subsection{Structured data on the unit sphere in $\mathbb{R}^3$}
\label{S:7.1}
Consider a collection $S_N = \{\mathbf{X}_i\}_{i=1}^N$ of $N=3,000$ points defining two cone-like structures in $\mathbb{R}^3$ as presented in Fig. \ref{fig:4}a.  In this problem, every point in the set $S_N$ is composed of three coordinates $(x_1, x_2, x_3)$ that lie on a surface constrained by the relation $\mathrm{sin}(\phi) = \mathrm{cos}^2(\theta)$ in spherical coordinates with radius $r$ uniformly distributed in the interval $[0,2]$. The set of random points on this constrained manifold are shown in Fig. \ref{fig:4}a, where the color map is determined by the magnitude of each point $\sqrt{x^2_1 + x^2_2 + x^2_3}$. By imposing these constraints, we can see (Figure \ref{fig:4}b) that the points are constrained to lie in subspaces that form a one-dimensional curve on $\mathcal{G}(1,3)$, i.e. the unit sphere in $\mathbb{R}^3$.

This example demonstrates that the Grassmannian diffusion maps can identify the intrinsic structure of points on $\mathcal{G}(1,3)$ given by the subspaces spanned by $\mathbf{\Psi} \in \mathcal{V}(1,3)$ (Fig.\ \ref{fig:4}b). As $\mathcal{G}(1,3)$ is represented by the unit sphere in $\mathbb{R}^3$, the affinity between points on it is given by the principal angle between pairs of unit vectors in $\mathbb{R}^3$.
\begin{figure}[tbhp]
	\centering
	\captionsetup{justification=centering}
	\includegraphics[width=0.9\textwidth]{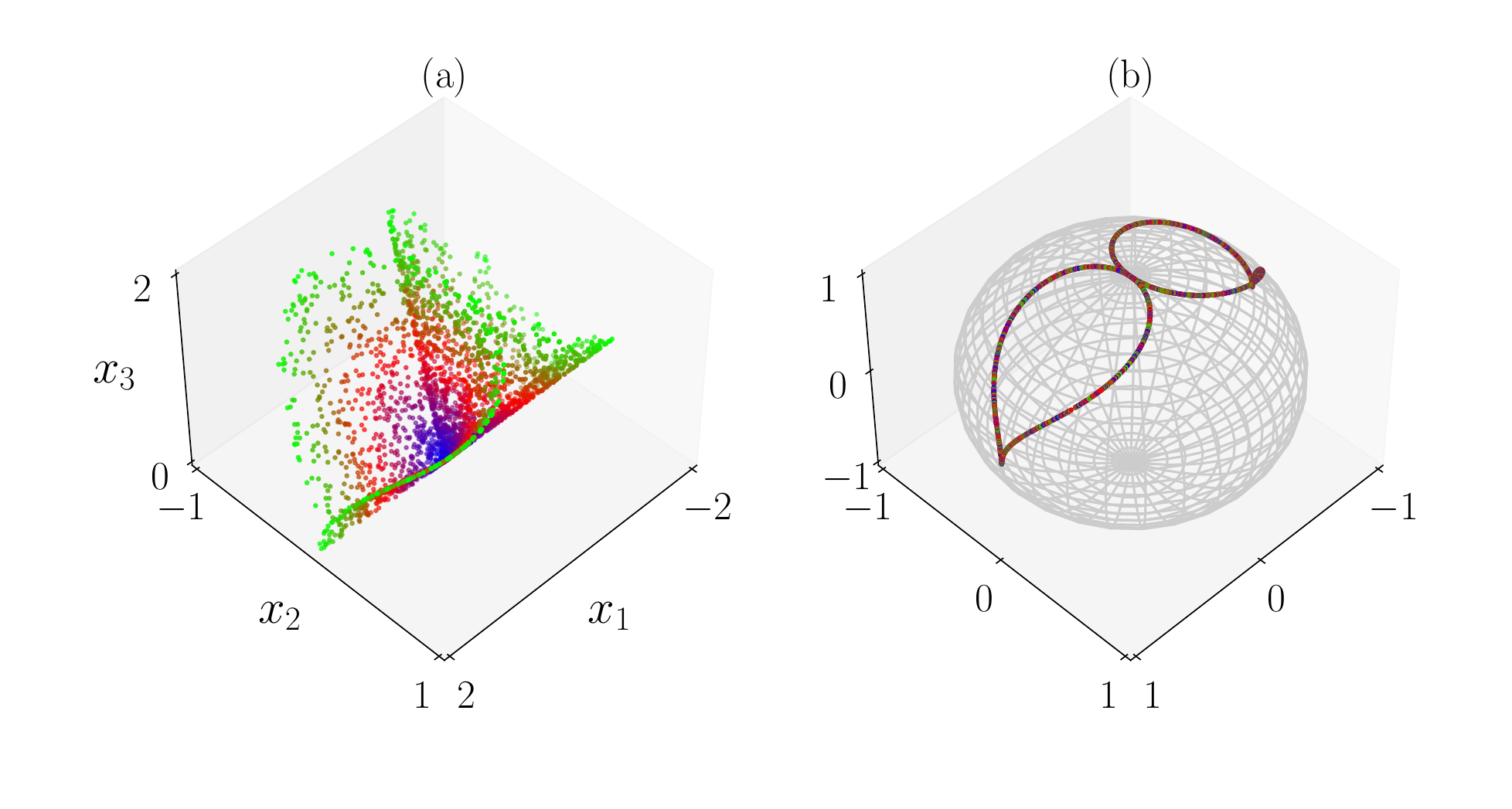} %.25
	%\vspace{-1.5em}
	\caption{Example 1: a) Elements of $S_N$ in $\mathbb{R}^3$ colored by their magnitude. b) Projection of the elements of $S_N$ onto the Grassmann manifold colored by their magnitude in the ambient space.}
	\vspace{-0.5em}
	\label{fig:4}
\end{figure}

Figs.\ \ref{fig:6}a and \ref{fig:6}b show the diffusion coordinates for the conventional diffusion maps with a color map determined by the first and the second diffusion coordinates, respectively. On the other hand, Fig. \ref{fig:6}c and \ref{fig:6}d show the diffusion coordinates for the Grassmannian diffusion maps with a color map determined by the first and the second diffusion coordinates, respectively. One can easily see that a radial symmetry is identified by the Grassmannian diffusion maps. This is an expected outcome due to the geometry of the data in the ambient space. 

\begin{figure}[tbhp]
	\centering
	\captionsetup{justification=centering}
	\includegraphics[width=0.8\textwidth]{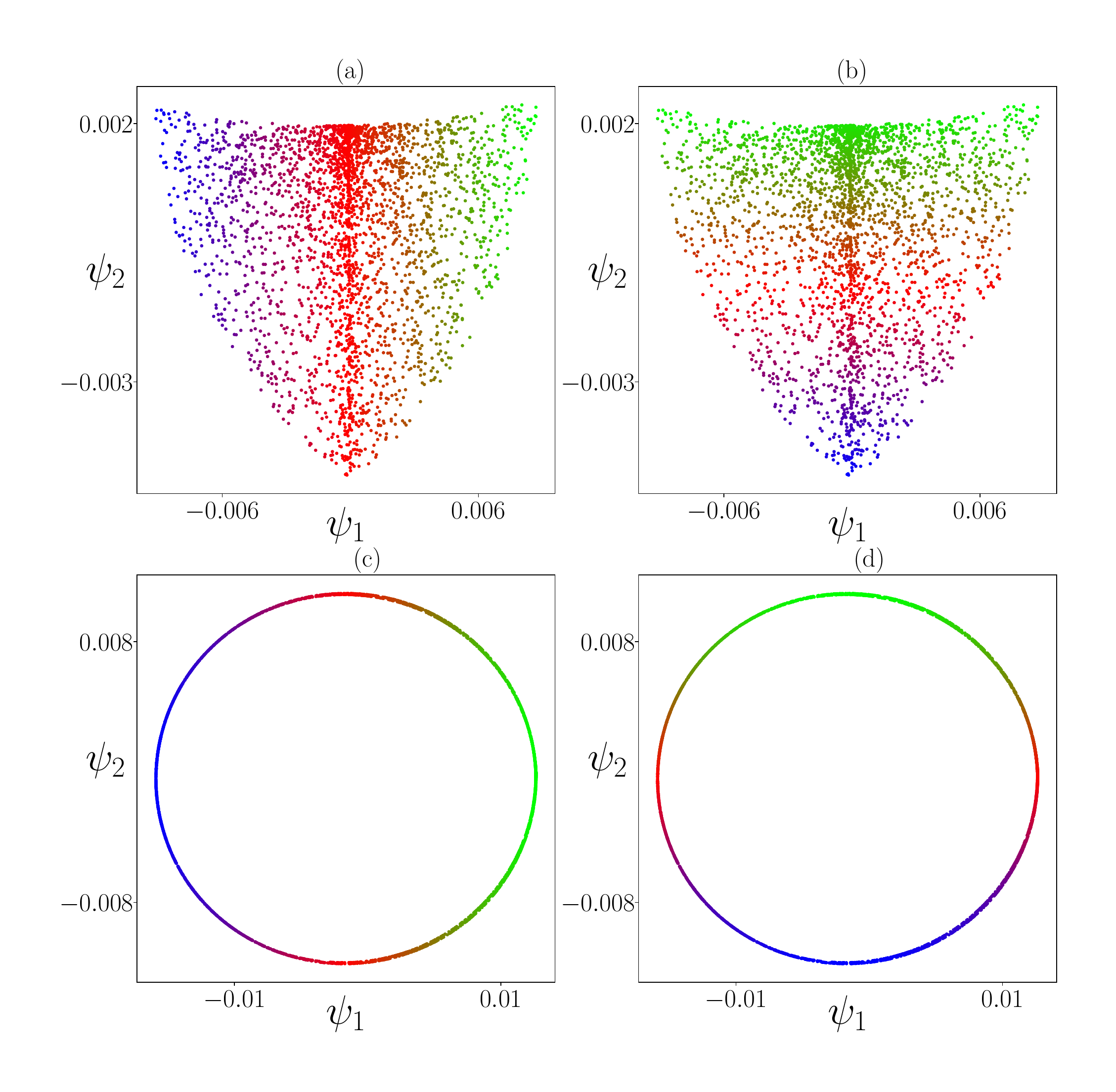}
	\vspace{-0.5em}
	\caption{{\color{red}Example 1: }Diffusion coordinates: a) {\color{red}Conventional DMaps} with color map defined by $\psi_1$ and b) $\psi_2$, c) {\color{red}Grassmannian DMaps} with color map defined by $\psi_1$ and by d) $\psi_2$.}
	\vspace{-0.5em}
	\label{fig:6}
\end{figure}

The performance of both conventional diffusion maps and the Grassmannian diffusion maps are assessed in Figs. \ref{fig:7} and \ref{fig:8}, respectively. In both cases, the first two nontrivial diffusion coordinates define the color maps. For the conventional diffusion maps, both the first and the second diffusion coordinates are mapped in the ambient space $\mathbb{R}^3$ according to the canonical directions, as observed in Figs \ref{fig:7}a,c. However, when the same diffusion coordinates are mapped on the Grassmann manifold $\mathcal{G}(1,3)$ they appear shuffled, and no logical parametrization can be extracted from it, as observed in Figs \ref{fig:7}b,d. This means that the conventional DMaps successfully identifies a low-dimensional representation of the data in the ambient space, but this low-dimensional representation does not reflect the underlying subspace structure of the data as it is constrained on the manifold.

On the other hand, for the Grassmannian diffusion maps, the first and the second diffusion coordinates in the ambient space $\mathbb{R}^3$ (Figs. \ref{fig:8}a,c) and on the Grassmann manifold $\mathcal{G}(1,3)$ (Figs. \ref{fig:8}b,d) are clearly structured to align with the intrinsic subspace structure of the data on $\mathcal{G}(1,3)$.
\begin{figure}[tbhp]
	\centering
	\captionsetup{justification=centering}
	\includegraphics[width=0.8\textwidth]{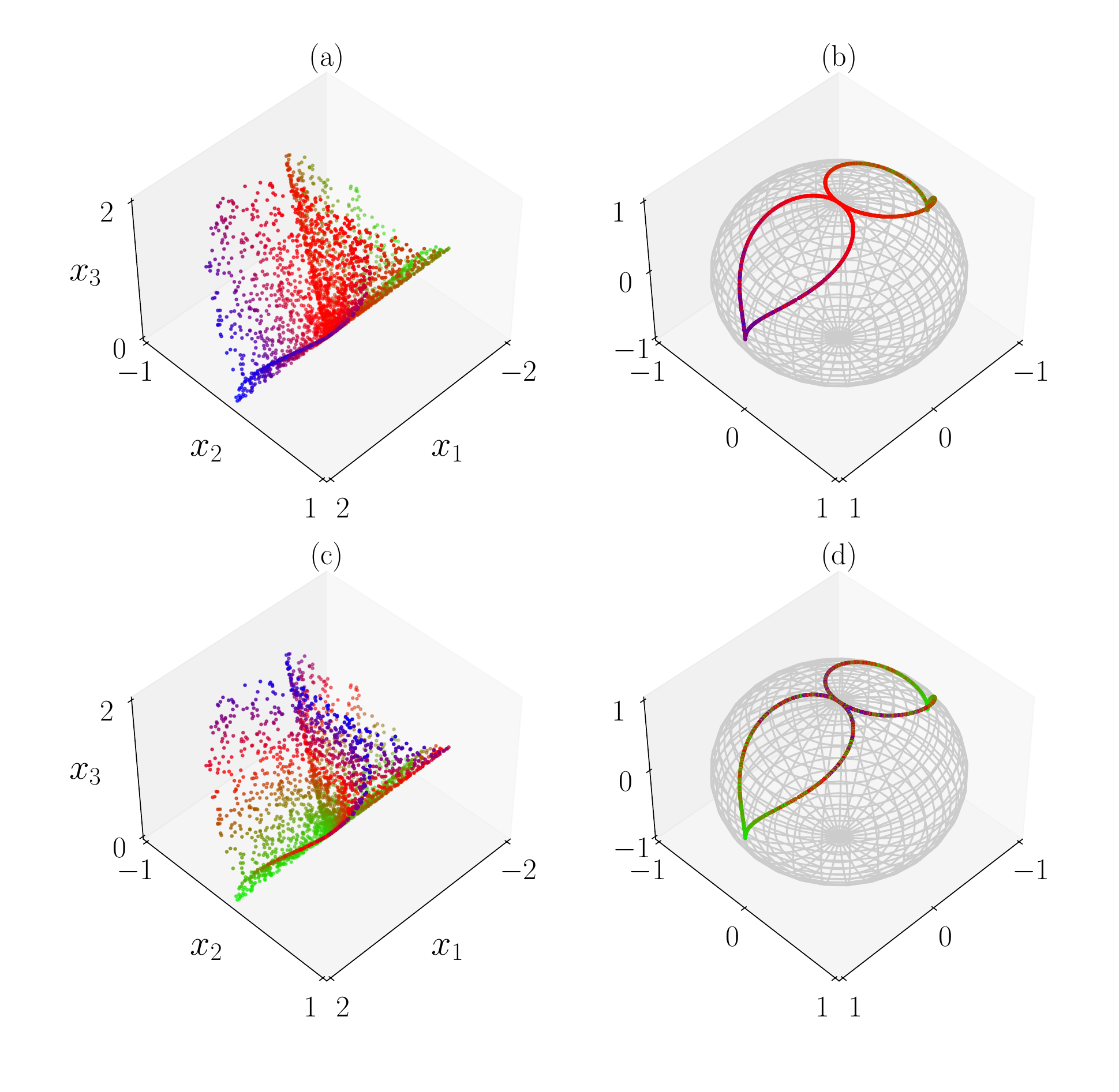}
	\vspace{-0.5em}
	\caption{Example 1: Conventional diffusion maps: a) color map for $\psi_1$ in the ambient space, and b) on the Grassmann manifold. c) color map for $\psi_2$ in the ambient space, and d) on the Grassmann manifold.}
	\vspace{-0.5em}
	\label{fig:7}
\end{figure}
\begin{figure}[tbhp]
	\centering
	\captionsetup{justification=centering}
	\includegraphics[width=0.8\textwidth]{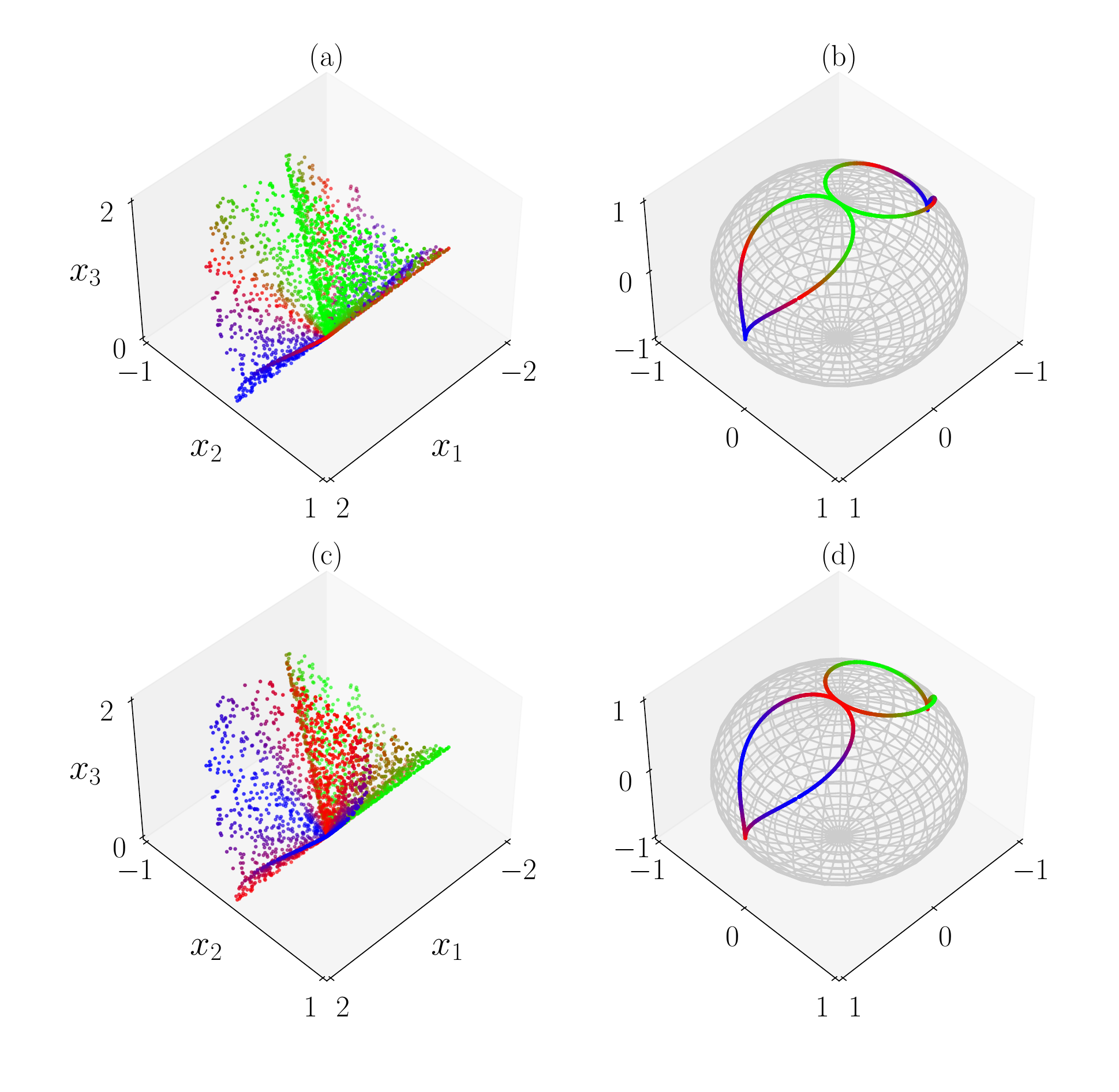}
	\vspace{-0.5em}
	\caption{Example 1: Grassmannian diffusion maps: a) color map for $\psi_1$ in the ambient space, and b) on the Grassmann manifold. c) color map for $\psi_2$ in the ambient space, and d) on the Grassmann manifold.}
	\vspace{-0.5em}
	\label{fig:8}
\end{figure}

\subsection{Robust classification of high-dimensional random field data}
\label{S:7.2}
In this example, the performance of the Grassmannian diffusion maps in the classification of high-dimensional random field data is investigated. We employ k-means on the obtained diffusion coordinates in order to cluster the data according to the most important features in the ambient space. 

Consider the elements of a dataset $S_N = \left\{\mathbf{X}_1, \dots, \mathbf{X}_N \right\}$, with $N=3,000$, generated with a prescribed rank $p=5$, where $\mathbf{X}_k \in \mathbb{R}^{n \times m}$, with $n=m=40$, is given by
\begin{equation}\label{eq:6.1}
    \mathbf{X}_k = \mathbf{U}_k\mathbf{A}_k\mathbf{U}_k^T,
\end{equation}
\noindent
where $\mathbf{A}_k \in \mathbb{R}^{p \times p}$ is a diagonal matrix whose the elements are i.i.d. random numbers with uniform distribution in the interval $(0,1]$, and the entries of $\mathbf{U}_k \in \mathbb{R}^{n \times p}$ take the following functional form
\begin{equation}\label{eq:6.2}
    u^{(k)}_{ij}=\sqrt{\frac{2}{n}}\mathrm{cos}\left[\frac{2 \pi (j+L_k)}{n}\left(i - T_k\right) \right],
\end{equation}
\noindent
with $j = 0, \dots, p$, and $i=0, \dots, n-1$. Further, $T_k$ and $L_k$ are uniform discrete random variables assuming integer values in the interval $[0, n-1]$ and $[1, \lfloor \frac{n}{2} \rfloor + 1 - p]$, respectively. Note that the samples $T_k$ and $L_k$ are the same for every column of $\mathbf{U}_k$, meaning that each element of $S_N$ is a realization of a two-dimensional random field having a stochastic basis. Our objective is thus to cluster these random fields according to their stochastic basis. Fig.\ \ref{fig:10} shows two realizations of $\mathbf{X}$ where $\mathbf{A}_1 = \mathrm{diag}(0.444, 0.828, 0.775,  0.913, 0.981)$, $L_1 = 20$, and $T_1 = 3$; and $\mathbf{A}_2 = \mathrm{diag}(0.614, 0.800, 0.184,  0.519, 0.961)$, $L_2 = 38$, and $T_2 = 6$.
\begin{figure}[tbhp]
	\centering
	\captionsetup{justification=centering}
	\includegraphics[width=0.8\textwidth]{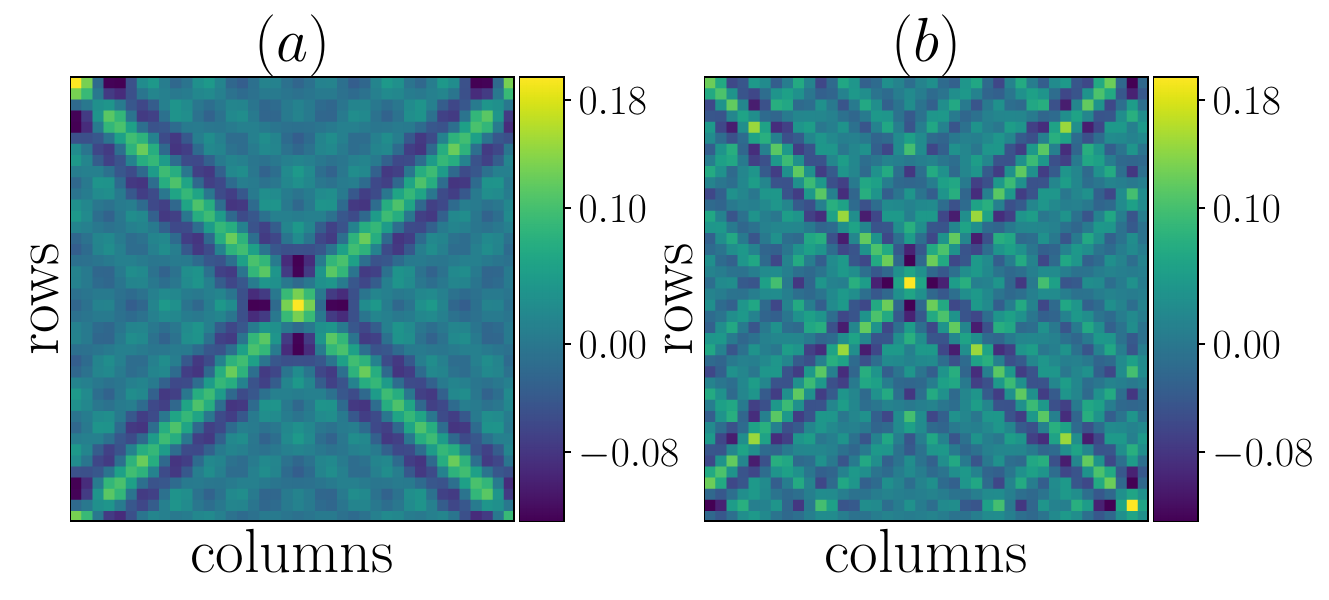}
	%\vspace{-1.5em}
	\caption{{\color{red}Example 2: }Two realizations of random field elements in $S_N$.}
	\vspace{-0.5em}
	\label{fig:10}
\end{figure}
%==============================================

The conventional DMaps and the GDMaps techniques are compared by their ability to reveal the underlying subspace structure of the elements in the set $S_N$.  To explore this, the first three diffusion coordinates obtained using the conventional and the Grassmannian diffusion maps are presented in Fig. \ref{fig:13}a and in Fig. \ref{fig:13}b, respectively; where every point defined by the diffusion coordinates corresponds to an element in $S_N$. In the k-means, we adopt 15 clusters corresponding to the 15 distinct values of $L_k$ in the integer-valued distribution since, from Eq.\ (\ref{eq:6.2}), we would expect $L_k$ to have a strong influence on the shape of the points projected onto $\mathcal{G}(5,40)$. We see that the conventional diffusion coordinates are highly dispersed and their dispersion adversely influences the clustering. This becomes clearer when the clusters are mapped back to the parameter space $\left(T_k \times L_k\right)$ in Fig. \ref{fig:14}a, where the shuffled colors indicate that the classification does not segregate based on the subspace characteristics of the elements of $S_N$ defined by $T_k$  and $L_k$. 

On the other hand, Fig. (\ref{fig:13}b) shows that the Grassmannian diffusion maps reveal clearly delineated clusters  identifying the subspace geometry of the elements in $S_N$. This behavior is evident in Fig. \ref{fig:14}b, where the clusters are mapped back onto the parameter space $\left(T_k \times L_k\right)$ and identify the elements belonging to the subsets defined by each $L_k$ with $k=1, \dots, 15$.
\begin{figure}[tbhp]
	\centering
	\captionsetup{justification=centering}
	\includegraphics[width=0.8\textwidth]{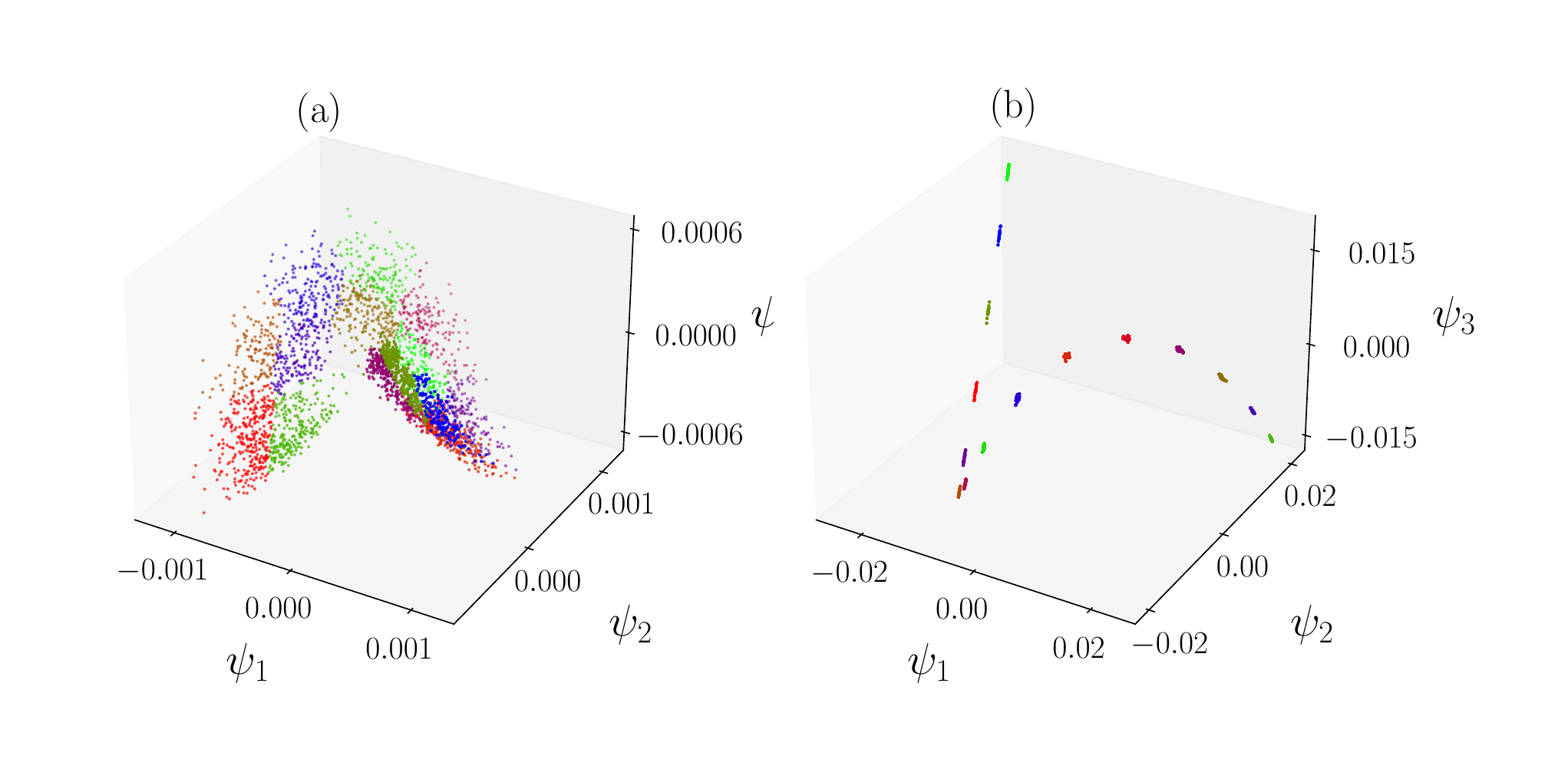}
	\vspace{-0.5em}
	\caption{Example 2: k-means applied on the first three diffusion coordinates (15 clusters): a) Conventional and b) Grassmannian diffusion maps: }
	\vspace{-0.5em}
	\label{fig:13}
\end{figure}

\begin{figure}[tbhp]
	\centering
	\captionsetup{justification=centering}
	\includegraphics[width=0.8\textwidth]{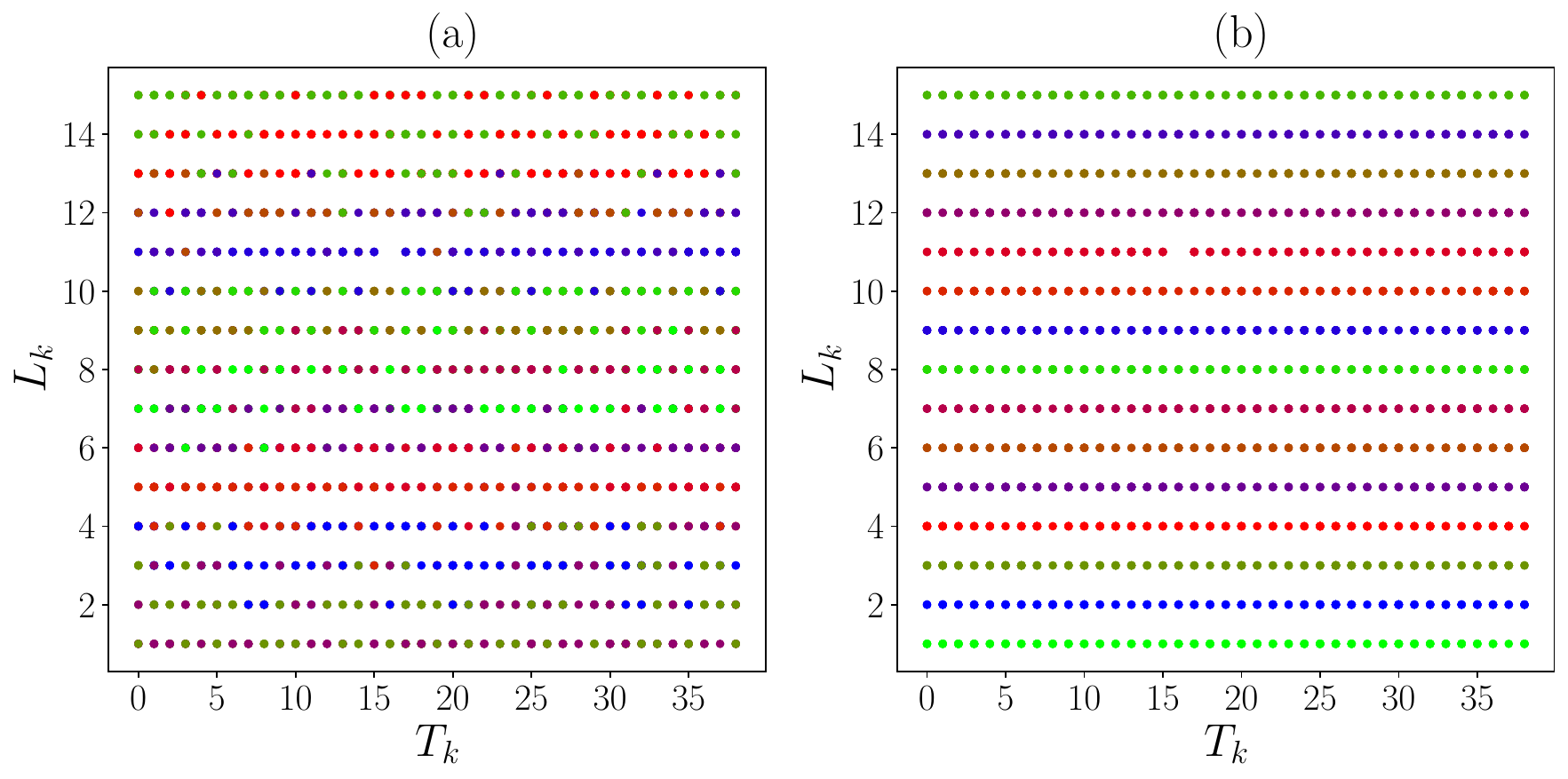}
	%\vspace{-1.5em}
	\caption{Example 2: Clusters of the diffusion coordinates mapped to the parameter space $(T_k \times L_k)$ (a) Conventional diffusion maps (b) Grassmannian diffusion maps.}
	\vspace{-0.5em}
	\label{fig:14}
\end{figure}

\subsection{Sparse representation-based face recognition}
\label{S:7.3}
In this example, faces of different subjects are identified using the sparse representation based classification technique of section \ref{S:6}. The face images used in this experiment were retrieved from the AT\&T Database of Faces (AT\&T Laboratories Cambridge). This database contains a set of 400 face images of 40 subjects, corresponding to 10 face images for each subject, with variation in the illumination, changes in the facial expressions (e.g., open and closed eyes, smiling and not smiling faces), and considering occlusions (e.g., glasses). Moreover, they were taken against a dark and homogeneous background and the subjects had some limited freedom for side-to-side movement. All images are in grey scale and were resized to dimension $200 \times 200$. 

In this experiment, Algorithm \ref{alg:6.1} and a modified version substituting steps 1-6 by the Gaussian kernel and conventional DMaps, are used for face recognition. The training set is composed by $N=360$ face images (40 classes with 9 face images of the same subject in each), and the test set is composed of 40 face images of distinct subjects. First, considering one face image from the test set (Fig. \ref{fig:15}b) and assuming that $p=4$ encodes the important features of each face image, we apply both the conventional and the Grassmannian diffusion maps truncating the dimension of the diffusion coordinates to $q=20$.
%\begin{figure}[tbhp]
    %\centering
    %\subfloat[\centering]{\includegraphics[width=10cm]{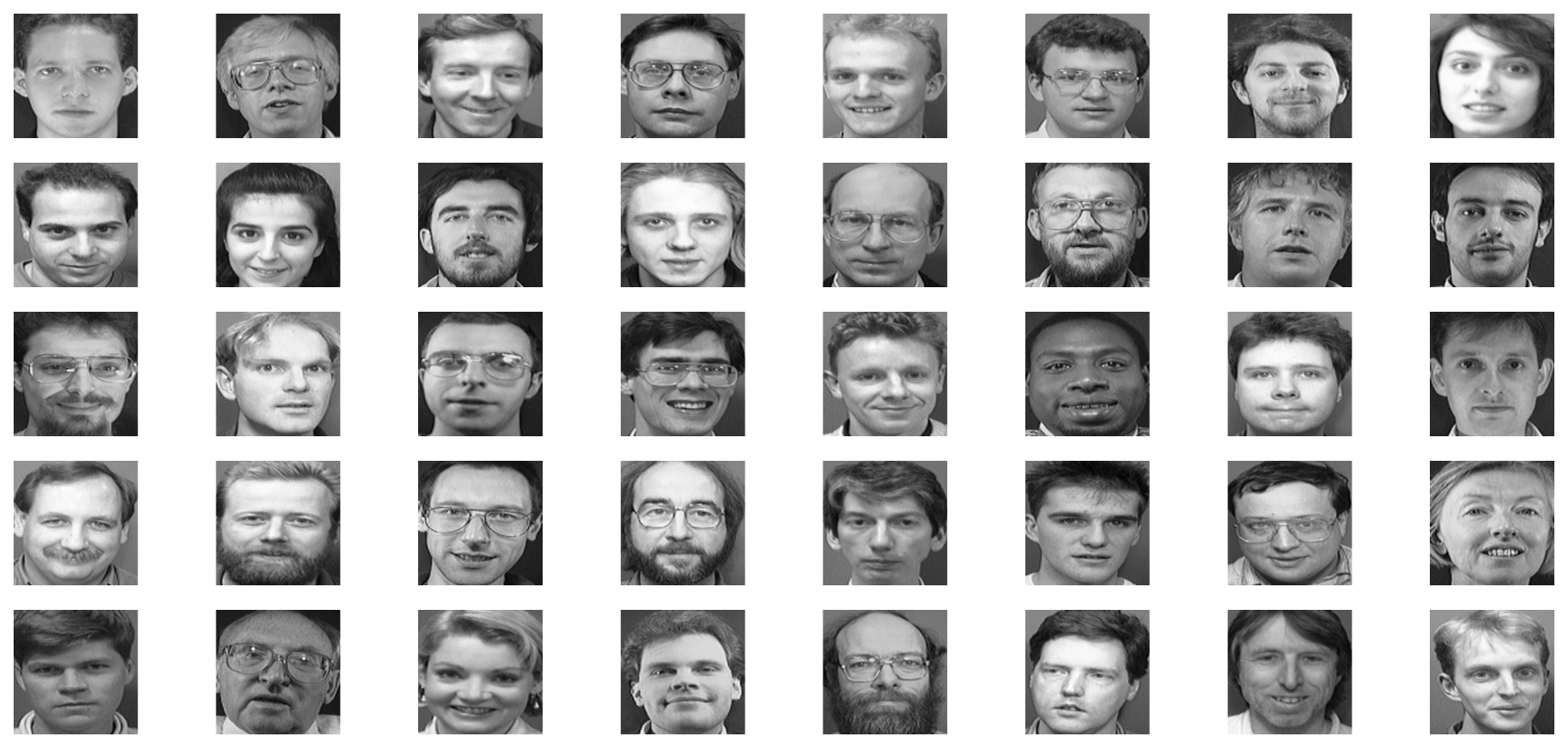} }%
    %\qquad
    %\subfloat[\centering]{\includegraphics[width=2cm]{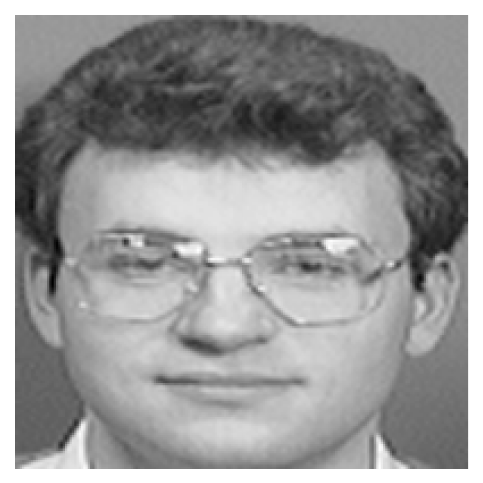} }%
    %\caption{\centering Example 3: a) One face image of each subject representing each of the 40 %classes and b) Test face image.}%
%    \label{fig:15}%
%\end{figure}

\begin{figure}[tbhp]
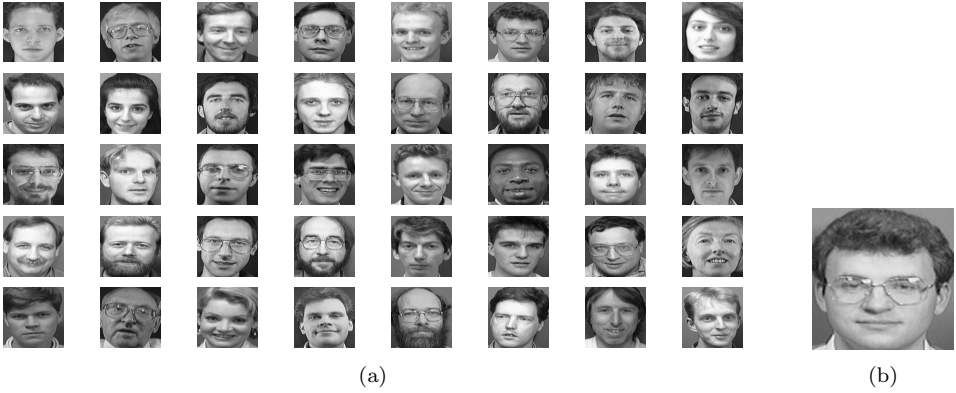

	\centering
	\captionsetup{justification=centering}
	\subfloat[]{
		\includegraphics[width=10cm]{figures/Ex3/ex3_2.pdf}
	}
	\hfill
	\subfloat[]{
		\includegraphics[width=2cm]{figures/Ex3/ex3_1.pdf}
	}
	\caption{Example 3: a) One face image of each subject representing each of the 40 classes and b) Test face image.}
	\label{fig:15}
\end{figure}

Facial recognition is performed using either the approximated solution $\mathbf{\hat{c}}$ of the underdetermined system or the residuals $r(k)$. Both the constrained and unconstrained convex optimization problems are considered. From Figures \ref{fig:19} and \ref{fig:21} we can clearly see that the present method informed by the conventional diffusion maps cannot recognize the test face image using either $\mathbf{\hat{c}}$ (Figs. \ref{fig:19}) or $r(k)$ (Figs. \ref{fig:21}) as the identification criterion. This observation is valid for both the constrained and unconstrained problems. 

On the other hand, Figs. \ref{fig:23} and Fig. \ref{fig:25} show correct facial recognition using the GDMaps-based method using either $\mathbf{c}$ (Fig. \ref{fig:23}) or $r(k)$ (Fig. \ref{fig:25}) as the identification criterion for both the constrained and unconstrained optimization problems. It is worth noting that the solution $\mathbf{\hat{c}}$ from the constrained problem for the conventional diffusion maps is not compact. Moreover, the condition number of the transition matrix of the Markov process in the conventional diffusion maps is 849 times larger than the one observed in the Grassmannian diffusion maps, which makes the Grassmannian diffusion maps more stable than its conventional counterpart.

\begin{figure}[tbhp]
	\centering
	\captionsetup{justification=centering}
	\includegraphics[width=0.8\textwidth]{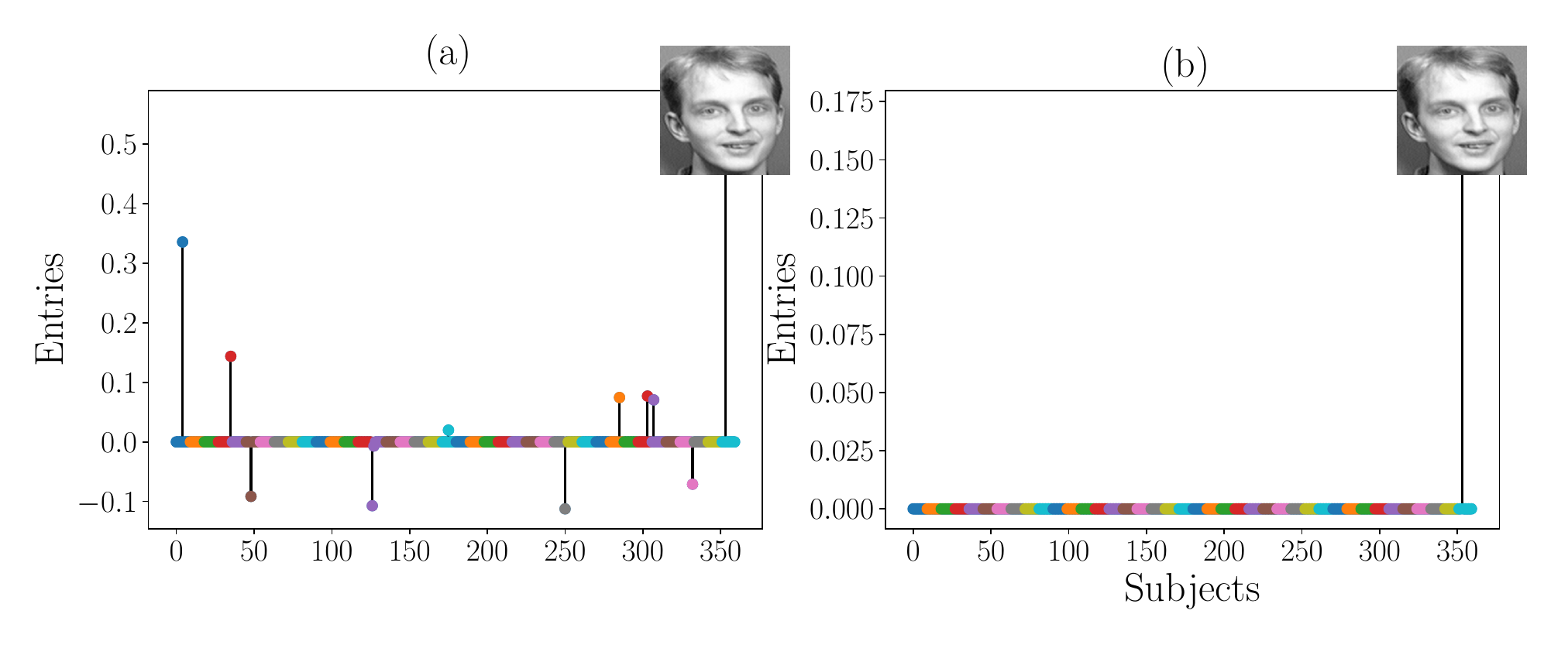}
	%\vspace{-1.5em}
	\caption{Example 3: Conventional diffusion maps -- Entries of $\mathbf{\hat{c}}$, with colors defining the 40 classes, for the a) constrained and b) unconstrained minimization problems.}
	\vspace{-0.5em}
	\label{fig:19}
\end{figure}

\begin{figure}[tbhp]
	\centering
	\captionsetup{justification=centering}
	\includegraphics[width=0.8\textwidth]{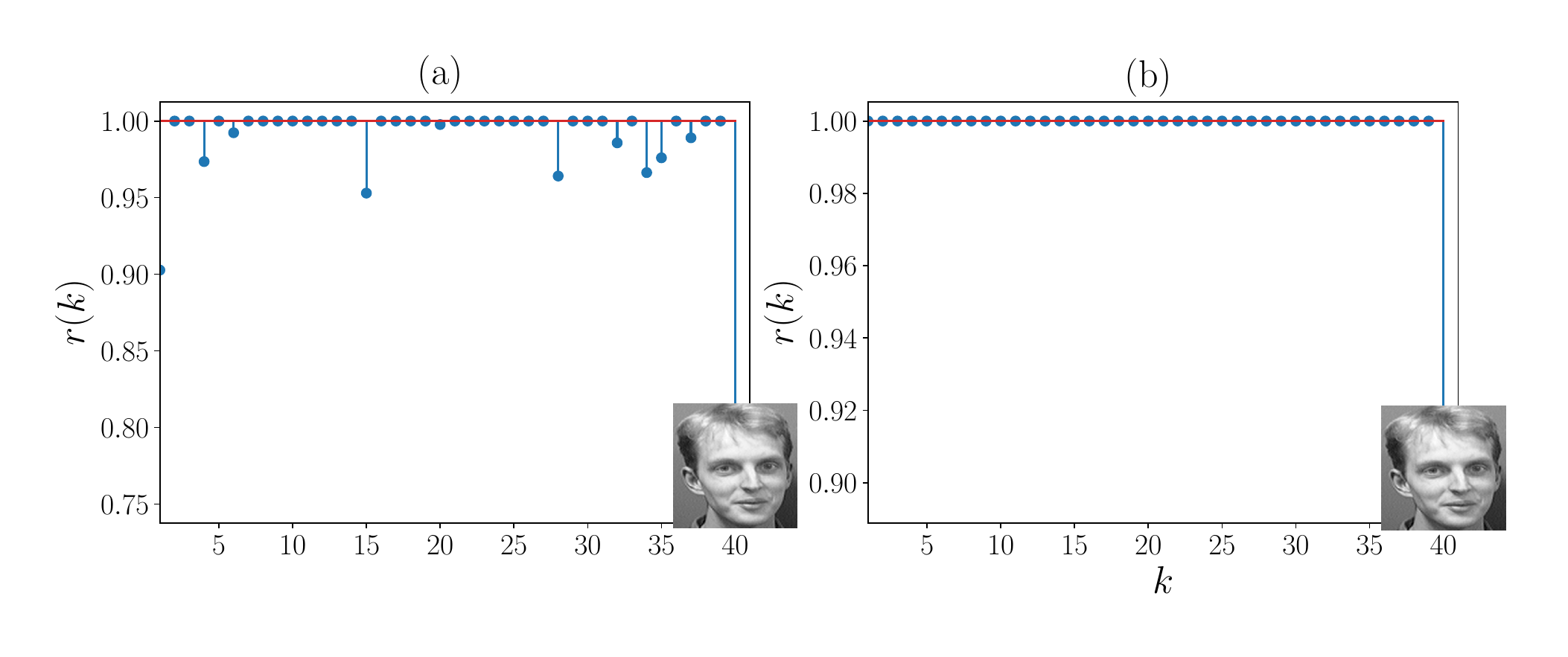}
	%\vspace{-1.5em}
	\caption{Example 3: Conventional diffusion maps -- Residuals $r(k)$ using the a) constrained and b) unconstrained minimization problems.}
	\vspace{-0.5em}
	\label{fig:21}
\end{figure}

\begin{figure}[tbhp]
	\centering
	\captionsetup{justification=centering}
	\includegraphics[width=0.8\textwidth]{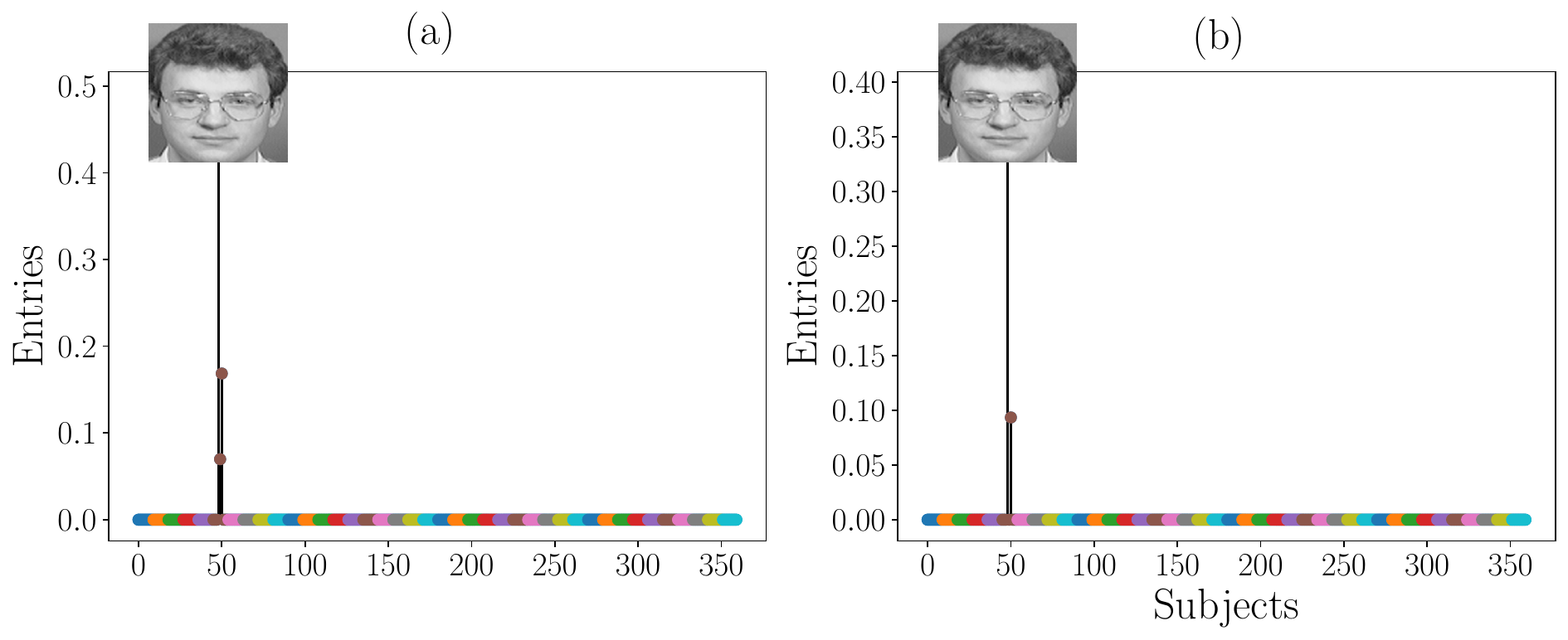}
	%\vspace{-1.5em}
	\caption{Example 3: Grassmannian diffusion maps -- Entries of $\mathbf{c}$, with colors defining the 40 classes, for the a) constrained and b) unconstrained minimization problems.}
	\vspace{-0.5em}
	\label{fig:23}
\end{figure}

\begin{figure}[tbhp]
	\centering
	\captionsetup{justification=centering}
	\includegraphics[width=0.8\textwidth]{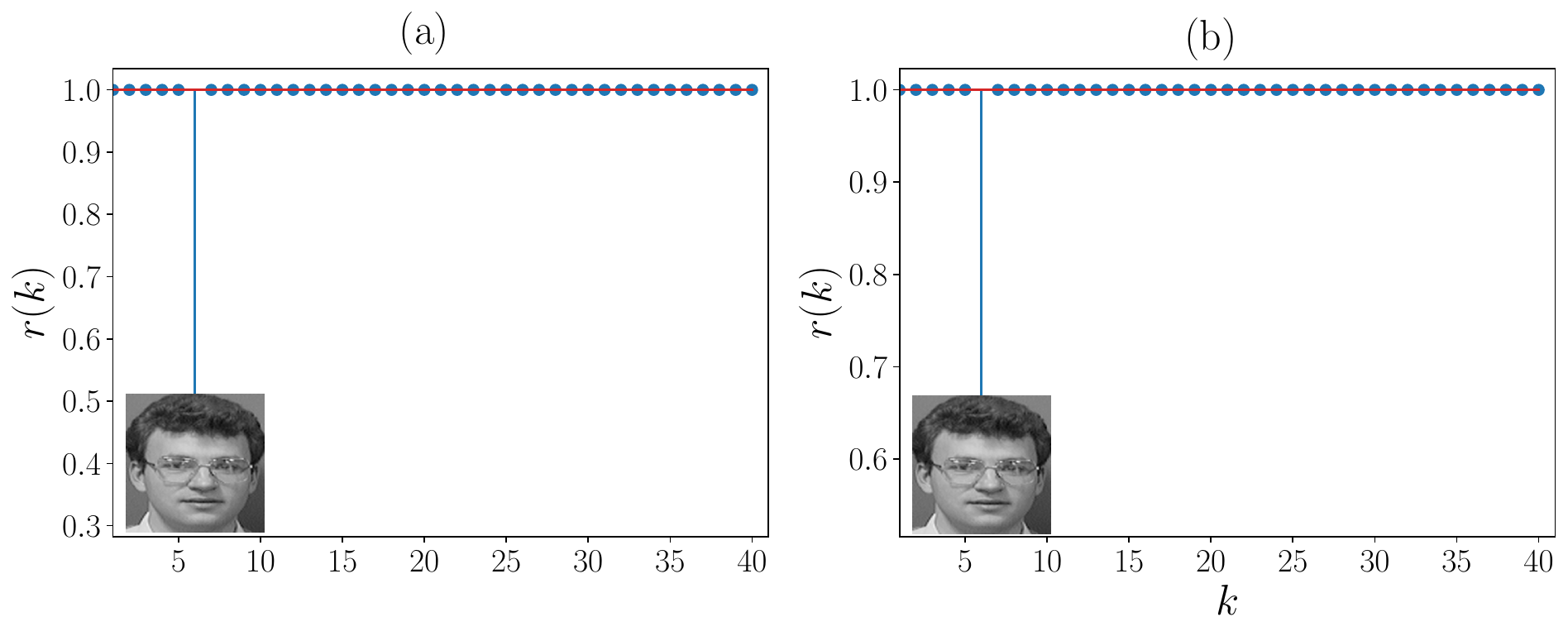}
	%\vspace{-1.5em}
	\caption{Example 3: Grassmannian diffusion maps -- Residuals $r(k)$ using the a) constrained and b) unconstrained minimization problems.}
	\vspace{-0.5em}
	\label{fig:25}
\end{figure}

Finally, the performance of the facial recognition for both the conventional and the Grassmannian diffusion maps is assessed for all 40 subjects in the test set. The approach employing the conventional diffusion maps only correctly identified 2 subjects when the constrained minimization is employed, and just 1 subject when solving the unconstrained minimization problem. This poor performance can be justified by the low condition number of the transition matrix of the Markov process in the conventional approach. 

Figure \ref{fig:27} shows how the recognition rate changes for different values of $p$ in the GDMaps, where we observe that high recognition rates are obtained even considering $p=1$. In the best cases, a recognition rate of $95\%$ was achieved for $p=12, 13$ and $14$. Moreover, we observed that the recognition rate tends to diminish for larger $p$ ($p>50$ for instance); indicating that the rank of the images is low and that higher dimensions are associated with noise and/or non-structural features of the images. Thus, the use of large values of $p$ is not justified in this of application. Moreover, the computational performance of the present technique is comparable with the performance of the existing sparse representation-based classification method \cite{wright2009}.
\begin{figure}[tbhp]
	\centering
	\captionsetup{justification=centering}
	\includegraphics[width=0.6\textwidth]{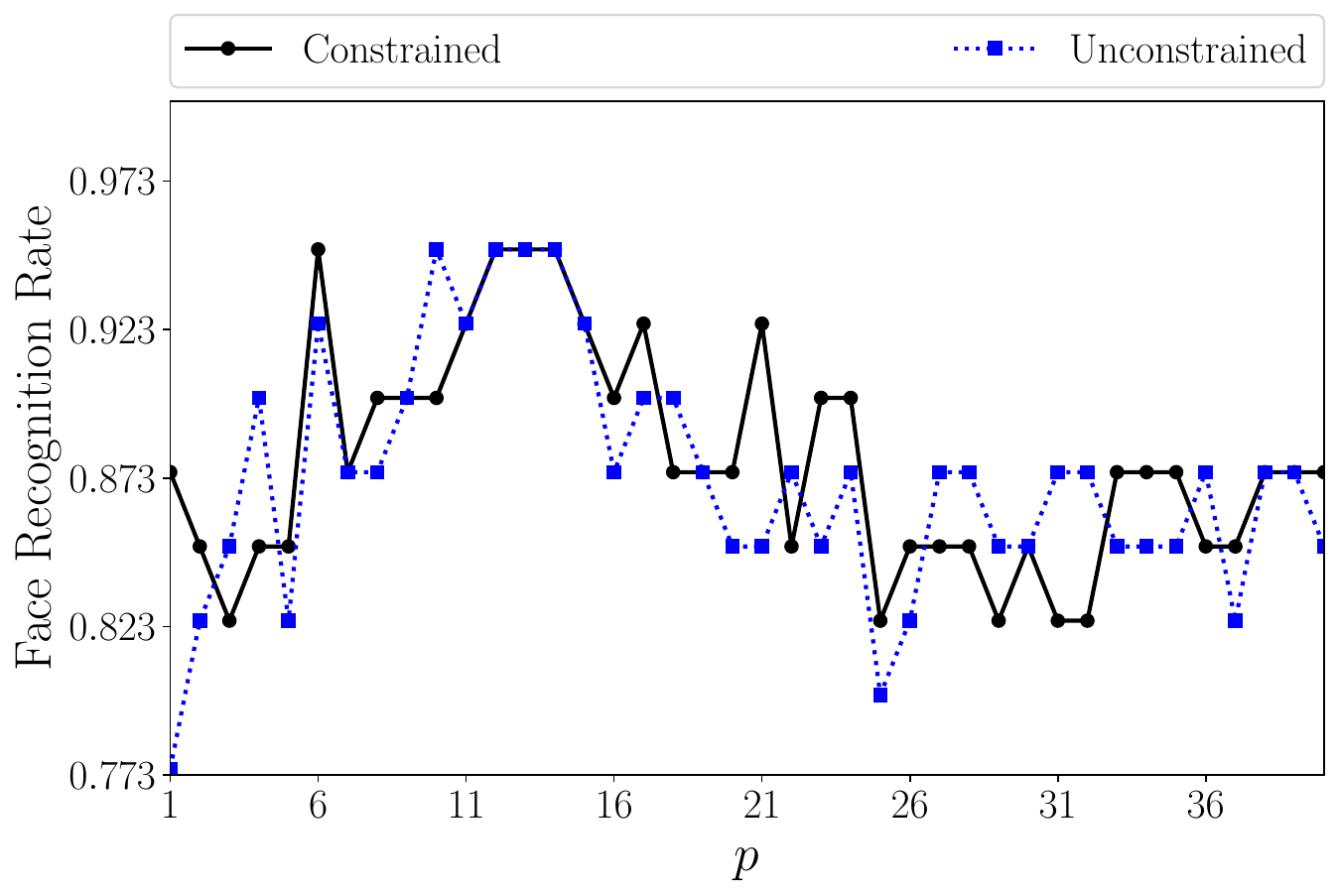}
	%\vspace{-1.5em}
	\caption{Example 3: Grassmannian diffusion maps -- Recognition rate as a function of $p$ for the recognition approach informed by the GDMaps.}
	\vspace{-0.5em}
	\label{fig:27}
\end{figure}

\section*{Concluding remarks}
\label{Cr}
In this paper, a novel dimensionality reduction technique, referred to as Grassmannian diffusion maps, was developed based on the concepts of Grassmann manifold and diffusion maps aiming at the characterization and classification of high-dimensional data in low-dimensional spaces. We demonstrated that an element of a dataset residing in a high-dimensional ambient space can be represented by a subspace constrained on the Grassmann manifold. The proposed GDMaps method is composed of two main steps, a pointwise linear dimensionality reduction and a multipoint nonlinear kernel-based dimensionality reduction. In the pointwise dimensionality, each element of a dataset is projected onto a Grassmann manifold where only the most relevant features defining the subspace where the data lie are retained. The subsequent multipoint dimensionality reduction applies the diffusion maps over the points on the Grassmann manifold to reveal the underlying structure of the points (subspaces) on the Grassmann manifold. Therefore, instead of using the information in the ambient space, which can be more susceptible to data corruption and external factors (e.g., noise, illumination level in images), the Grassmannian diffusion maps uses the information provided by the underlying lower-dimensional subspace of each data point. We further propose an algorithm to leverage the proposed GDMaps for classification of high-dimensional data. Along with a presentation of the proposed method, we provide the relevant details underpinning the method; in particular a careful investigation of kernels on the Grassmannian is presented to place the GDMaps on a firm foundation.

The performance of the Grassmannian diffusion maps was assessed in three examples. The first one contained a theoretical experiment to verify the ability of the GDMaps to identify a structured connection among points known to lie on a constrained submanifold of the Grassmann manifold, represented by the unit sphere in $\mathbb{R}^3$. It was shown that the Grassmannian diffusion maps can identify a well-defined parametric structure of the data when they are projected on the Grassmann manifold $\mathcal{G}(1,3)$. In the second example, a classification experiment with high-dimensional random field data was performed. The Grassmannian diffusion maps was applied to a dataset of matrices formed by oscillatory functions with random phase and frequency. A k-means was used to cluster the Grassmannian diffusion coordinates corresponding to matrices composed of oscillatory functions with same frequency. The third example presented a practical application of both the Grassmannian diffusion maps and the Grassmannian diffusion maps-based data classification using sparse representation. It was demonstrated using the AT\&T Database of Faces (AT\&T Laboratories Cambridge) that the Grassmannian diffusion maps is robust in identifying 40 face images subject to varying illumination conditions, change in face expressions, and occurrence of occlusions. Moreover, the developed technique presented high recognition rates ($95\%$ in the best-case) using low-dimensional data.

\section*{Acknowledgements} This work has been supported by the U.S. Department of Energy under grant number DE-SC0020428 and by internal funds provided by Johns Hopkins University.\\

\bibliographystyle{siamplain}
\bibliography{references}

\begin{thebibliography}{10}

\bibitem{absil2006}
{\sc P.-A. Absil, A.~Edelman, and P.~Koev}, {\em On the largest principal angle
  between random subspaces}, Linear Algebra and its Applications, 414 (2006),
  pp.~288 -- 294.

\bibitem{aggarwal2001surprising}
{\sc C.~C. Aggarwal, A.~Hinneburg, and D.~A. Keim}, {\em On the surprising
  behavior of distance metrics in high dimensional space}, in International
  conference on database theory, Springer, 2001, pp.~420--434.

\bibitem{auslander2012}
{\sc L.~Auslander and R.~MacKenzie}, {\em Introduction to Differentiable
  Manifolds}, Dover Books on Mathematics, Dover Publications, 2012.

\bibitem{balasubramanian2002}
{\sc M.~Balasubramanian and E.~L. Schwartz}, {\em The isomap algorithm and
  topological stability}, Science, 295 (2002), pp.~7--7,
  \url{https://doi.org/10.1126/science.295.5552.7a}.

\bibitem{baraniuk2010}
{\sc R.~G. Baraniuk, V.~Cevher, and M.~B. Wakin}, {\em Low-dimensional models
  for dimensionality reduction and signal recovery: A geometric perspective},
  Proceedings of the IEEE, 98 (2010), pp.~959--971.

\bibitem{begelfor2006}
{\sc E.~Begelfor and M.~Werman}, {\em Affine invariance revisited}, in 2006
  IEEE Computer Society Conference on Computer Vision and Pattern Recognition
  (CVPR'06), vol.~2, 2006, pp.~2087--2094.

\bibitem{belkin2003}
{\sc M.~Belkin and P.~Niyogi}, {\em Laplacian eigenmaps for dimensionality
  reduction and data representation}, Neural Computation, 15 (2003),
  pp.~1373--1396.

\bibitem{candes2008}
{\sc E.~J. {Candes} and M.~B. {Wakin}}, {\em An introduction to compressive
  sampling}, IEEE Signal Processing Magazine, 25 (2008), pp.~21--30,
  \url{https://doi.org/10.1109/MSP.2007.914731}.

\bibitem{coifman2006}
{\sc R.~R. Coifman and S.~Lafon}, {\em Diffusion maps}, Applied and
  Computational Harmonic Analysis, 21 (2006), pp.~5 -- 30,
  \url{https://doi.org/https://doi.org/10.1016/j.acha.2006.04.006}.
\newblock Special Issue: Diffusion Maps and Wavelets.

\bibitem{courant1996}
{\sc H.~Courant, R.~Courant, C.~Courant, H.~Robbins, I.~Stewart, and
  P.~Robbins}, {\em What is Mathematics?: An Elementary Approach to Ideas and
  Methods}, Oxford paperbacks : Mathematics / Oxford paperbacks, Oxford
  University Press, 1996.

\bibitem{donoho2003}
{\sc D.~L. Donoho and C.~Grimes}, {\em Hessian eigenmaps: Locally linear
  embedding techniques for high-dimensional data}, Proceedings of the National
  Academy of Sciences, 100 (2003), pp.~5591--5596,
  \url{https://doi.org/10.1073/pnas.1031596100}.

\bibitem{edelman2005}
{\sc A.~Edelman and N.~R. Rao}, {\em Random matrix theory}, Acta Numerica, 14
  (2005), p.~233–297.

\bibitem{fan2012}
{\sc M.~Fan, X.~Zhang, Z.~Lin, Z.~Zhang, and H.~Bao}, {\em Geodesic based
  semi-supervised multi-manifold feature extraction}, in 2012 IEEE 12th
  International Conference on Data Mining, 2012, pp.~852--857,
  \url{https://doi.org/10.1109/ICDM.2012.99}.

\bibitem{feng2013}
{\sc L.~Feng, S.~Liu, Z.~Wu, and B.~Jin}, {\em Maximal similarity embedding},
  Neurocomputing, 99 (2013), pp.~423--438.

\bibitem{gan2020}
{\sc C.~Gan, J.~Mao, Z.~Zhang, and Q.~Zhu}, {\em A tensor compression algorithm
  using tucker decomposition and dictionary dimensionality reduction},
  International Journal of Distributed Sensor Networks, 16 (2020),
  p.~1550147720916408.

\bibitem{giovanis2018}
{\sc D.~Giovanis and M.~Shields}, {\em Uncertainty quantification for complex
  systems with very high dimensional response using grassmann manifold
  variations}, Journal of Computational Physics, 364 (2018), pp.~393 -- 415,
  \url{https://doi.org/https://doi.org/10.1016/j.jcp.2018.03.009}.

\bibitem{giovanis2020}
{\sc D.~G. Giovanis and M.~D. Shields}, {\em Data-driven surrogates for high
  dimensional models using gaussian process regression on the grassmann
  manifold}, 2020, \url{https://arxiv.org/abs/2003.11910}.

\bibitem{griffiths2011}
{\sc P.~Griffiths and J.~Harris}, {\em Complex Algebraic Varieties}, John Wiley
  \& Sons, 2011, ch.~1, pp.~128--211.

\bibitem{gu2018}
{\sc A.~Gu, F.~Sala, B.~Gunel, and C.~{R\'{e}}}, {\em Learning mixed-curvature
  representations in product spaces}, in International Conference on Learning
  Representations, 2019.

\bibitem{hamm2008}
{\sc J.~Hamm and D.~D. Lee}, {\em Grassmann discriminant analysis: A unifying
  view on subspace-based learning}, in Proceedings of the 25th International
  Conference on Machine Learning, ICML ’08, New York, NY, USA, 2008,
  Association for Computing Machinery, p.~376–383,
  \url{https://doi.org/10.1145/1390156.1390204}.

\bibitem{hamm2009}
{\sc J.~Hamm and D.~D. Lee}, {\em Extended grassmann kernels for subspace-based
  learning}, in Advances in Neural Information Processing Systems 21,
  D.~Koller, D.~Schuurmans, Y.~Bengio, and L.~Bottou, eds., Curran Associates,
  Inc., 2009, pp.~601--608.

\bibitem{harandi2015}
{\sc M.~Harandi, R.~Hartley, C.~Shen, B.~Lovell, and C.~Sanderson}, {\em
  Extrinsic methods for coding and dictionary learning on grassmann manifolds},
  International Journal of Computing Vision, 114 (2015), p.~113–136,
  \url{https://doi.org/https://doi.org/10.1007/s11263-015-0833-x}.

\bibitem{harandi2014}
{\sc M.~T. Harandi, M.~Salzmann, S.~Jayasumana, R.~Hartley, and H.~Li}, {\em
  Expanding the family of grassmannian kernels: An embedding perspective},
  2014, \url{https://arxiv.org/abs/1407.1123}.

\bibitem{hartman1950}
{\sc P.~Hartman}, {\em On the local uniqueness of geodesics}, American Journal
  of Mathematics, 72 (1950), pp.~723--730.

\bibitem{jolliffe2011}
{\sc I.~Jolliffe}, {\em Principal Component Analysis}, Springer Berlin
  Heidelberg, 2011, pp.~1094--1096,
  \url{https://doi.org/10.1007/978-3-642-04898-2_455}.

\bibitem{lin2008}
{\sc T.~Lin and H.~Zha}, {\em Riemannian manifold learning}, IEEE Transactions
  on Pattern Analysis and Machine Intelligence, 30 (2008), pp.~796--809,
  \url{https://doi.org/10.1109/TPAMI.2007.70735}.

\bibitem{maruskin2012}
{\sc J.~Maruskin}, {\em Introduction to Dynamical Systems and Geometric
  Mechanics}, Solar Crest Publishing, LLC, 2012.

\bibitem{miao1992}
{\sc J.~Miao and A.~Ben-Israel}, {\em On principal angles between subspaces in
  rn}, Linear Algebra and its Applications, 171 (1992), pp.~81 -- 98.

\bibitem{moon2018}
{\sc K.~R. Moon, J.~S. Stanley, D.~Burkhardt, D.~{van Dijk}, G.~Wolf, and
  S.~Krishnaswamy}, {\em Manifold learning-based methods for analyzing
  single-cell rna-sequencing data}, Current Opinion in Systems Biology, 7
  (2018), pp.~36 -- 46.

\bibitem{moon2017}
{\sc K.~R. Moon, D.~van Dijk, Z.~Wang, W.~Chen, M.~J. Hirn, R.~R. Coifman,
  N.~B. Ivanova, G.~Wolf, and S.~Krishnaswamy}, {\em Phate: A dimensionality
  reduction method for visualizing trajectory structures in high-dimensional
  biological data}, bioRxiv,  (2017), \url{https://doi.org/10.1101/120378},
  \url{https://arxiv.org/abs/https://www.biorxiv.org/content/early/2017/03/24/120378.full.pdf}.

\bibitem{muirhead1982}
{\sc R.~J. Muirhead}, {\em Aspects of multivariate statistical theory}, John
  Wiley \& Sons, 1982.

\bibitem{olivier2020uqpy}
{\sc A.~Olivier, D.~Giovanis, B.~Aakash, M.~Chauhan, L.~Vandanapu, and
  M.~Shields}, {\em Uqpy: A general purpose python package and development
  environment for uncertainty quantification}, Journal of Computational
  Science,  (2020), p.~101204.

\bibitem{oprea2007}
{\sc J.~Oprea}, {\em Differential Geometry and Its Applications}, Classroom
  resource materials, Mathematical Association of America, 2007.

\bibitem{roweis2000}
{\sc S.~T. Roweis and L.~K. Saul}, {\em Nonlinear dimensionality reduction by
  locally linear embedding}, Science, 290 (2000), pp.~2323--2326,
  \url{https://doi.org/10.1126/science.290.5500.2323}.

\bibitem{scholkopf1998}
{\sc B.~Sch{\"{o}}lkopf, A.~Smola, and K.~M{\"{u}}ller}, {\em Nonlinear
  component analysis as a kernel eigenvalue problem}, Neural Computation, 10
  (1998), pp.~1299--1319.

\bibitem{singer2012}
{\sc A.~Singer and H.-T. Wu}, {\em Vector diffusion maps and the connection
  laplacian}, Communications on Pure and Applied Mathematics, 65 (2012),
  pp.~1067--1144, \url{https://doi.org/https://doi.org/10.1002/cpa.21395}.

\bibitem{soize2020}
{\sc C.~Soize and R.~Ghanem}, {\em Probabilistic learning on manifolds}, 2020,
  \url{https://arxiv.org/abs/2002.12653}.

\bibitem{sommer2020}
{\sc S.~Sommer, T.~Fletcher, and X.~Pennec}, {\em 1 - introduction to
  differential and riemannian geometry}, in Riemannian Geometric Statistics in
  Medical Image Analysis, X.~Pennec, S.~Sommer, and T.~Fletcher, eds., Academic
  Press, 2020, pp.~3 -- 37.

\bibitem{tenenbaum2000}
{\sc J.~B. Tenenbaum, V.~d. Silva, and J.~C. Langford}, {\em A global geometric
  framework for nonlinear dimensionality reduction}, Science, 290 (2000),
  pp.~2319--2323, \url{https://doi.org/10.1126/science.290.5500.2319}.

\bibitem{turaga2008}
{\sc P.~Turaga, A.~Veeraraghavan, and R.~Chellappa}, {\em Statistical analysis
  on stiefel and grassmann manifolds with applications in computer vision}, in
  2008 IEEE Conference on Computer Vision and Pattern Recognition, 2008,
  pp.~1--8.

\bibitem{uh2016}
{\sc J.~Uh, M.~A. Khan, and C.~Hua}, {\em Four-dimensional {MRI} using an
  internal respiratory surrogate derived by dimensionality reduction}, Physics
  in Medicine and Biology, 61 (2016), pp.~7812--7832.

\bibitem{maaten2008}
{\sc L.~J.~P. van~der Maaten and G.~E. Hinton}, {\em Visualizing data using
  t-sne}, Journal of Machine Learning Research, 9 (2008), p.~2579–2605.

\bibitem{vishwanathan2004}
{\sc S.~Vishwanathan and A.~Smola}, {\em Binet-cauchy kernels. advances in
  neural information processing systems}, in Advances in Neural Information
  Processing Systems, no.~17 in NIPS ’04, 2004.

\bibitem{wang2019}
{\sc B.~Wang and J.~Gao}, {\em Unsupervised Learning on Grassmann Manifolds for
  Big Data}, Springer International Publishing, Cham, 2019, pp.~151--180,
  \url{https://doi.org/10.1007/978-3-319-97598-6_7}.

\bibitem{wang2015}
{\sc B.~Wang, Y.~Hu, J.~Gao, Y.~Sun, and B.~Yin}, {\em Low rank representation
  on grassmann manifolds: An extrinsic perspective}, 2015,
  \url{https://arxiv.org/abs/1504.01807}.

\bibitem{wang2020}
{\sc R.~Wang, X.~Wu, and J.~Kittler}, {\em Graph embedding multi-kernel metric
  learning for image set classification with grassmann manifold-valued
  features}, IEEE Transactions on Multimedia,  (2020), pp.~1--1.

\bibitem{wong1967}
{\sc Y.-C. Wong}, {\em Differential geometry of grassmann manifolds}, in
  Proceedings of the National Academy of Sciences of the United States of
  America, 1967, p.~589–594.

\bibitem{wright2009}
{\sc J.~Wright, A.~Y. Yang, A.~Ganesh, S.~S. Sastry, and Y.~Ma}, {\em Robust
  face recognition via sparse representation}, IEEE Transactions on Pattern
  Analysis and Machine Intelligence, 31 (2009), pp.~210--227.

\bibitem{wu2020}
{\sc J.~Wu, L.~Huang, W.~Li, H.~Chan, C.~Liu, and R.~Gau}, {\em Sparse subspace
  clustering with linear subspace-neighborhood-preserving data embedding}, in
  2020 IEEE 11th Sensor Array and Multichannel Signal Processing Workshop
  (SAM), 2020, pp.~1--5.

\bibitem{wu2014}
{\sc S.~Wu, X.~Jing, J.~Yang, and J.~Yang}, {\em Learning image manifold using
  neighboring similarity integration}, in 2014 IEEE International Conference on
  Image Processing (ICIP), 2014, pp.~1897--1901,
  \url{https://doi.org/10.1109/ICIP.2014.7025380}.

\bibitem{yan2020}
{\sc W.~Yan, Q.~Sun, H.~Sun, and Y.~Li}, {\em Joint dimensionality reduction
  and metric learning for image set classification}, Information Sciences, 516
  (2020), pp.~109 -- 124.

\bibitem{ye2016schubert}
{\sc K.~Ye and L.-H. Lim}, {\em Schubert varieties and distances between
  subspaces of different dimensions}, SIAM Journal on Matrix Analysis and
  Applications, 37 (2016), pp.~1176--1197.

\bibitem{ye2019}
{\sc K.~Ye, K.~S.-W. Wong, and L.-H. Lim}, {\em Optimization on flag
  manifolds}, 2019, \url{https://arxiv.org/abs/1907.00949}.

\bibitem{zhao2018}
{\sc Y.~Zhao, X.~You, S.~Yu, C.~Xu, W.~Yuan, X.-Y. Jing, T.~Zhang, and D.~Tao},
  {\em Multi-view manifold learning with locality alignment}, Pattern
  Recognition, 78 (2018), pp.~154--166,
  \url{https://doi.org/https://doi.org/10.1016/j.patcog.2018.01.012}.

\bibitem{zimmermann2019}
{\sc R.~Zimmermann}, {\em Manifold interpolation and model reduction}, 2019,
  \url{https://arxiv.org/abs/1902.06502}.

\end{thebibliography}

\appendix

\section{Grassmannian kernel dimensionality}
\label{a:grassmannian_kernel_dimensionality}
Some properties of the projection and Binet-Cauchy kernels that may influence their selection for a specific application are investigated in this appendix. Specifically, the relationship between the dimensionality of the Grassmann manifold $\mathcal{G}(p,n)$, and the entries of the kernel matrix, is analyzed based on the distribution of the principal angles between random matrices. In this regard, let's first assume $p=1$ in Eqs. (\ref{eq:4.2} - \ref{eq:4.5}). One can easily show that $k_{pr}(\mathbf{\Psi}_i,\mathbf{\Psi}_j) = k_{bc}(\mathbf{\Psi}_i,\mathbf{\Psi}_j)$ for two different subspaces $\mathcal{X}_i=\mathrm{span}\left(\mathbf{\Psi}_i\right)$ and $\mathcal{X}_j=\mathrm{span}\left(\mathbf{\Psi}_j\right)$. However, more attention should be paid to the influence of $p$ and $n$, when $1< p < n$ and $n > 1$, on the affinity between distinct points on the same Grassmann manifold. Next, a more detailed analysis is presented for both the projection and Binet-Cauchy kernels. 

The lemmas presented in this section are developed considering two subspaces $\mathcal{X}_i=\mathrm{span}\left(\mathbf{\Psi}_i\right)$ and $\mathcal{X}_j=\mathrm{span}\left(\mathbf{\Psi}_j\right)$ chosen from the invariant uniform distribution on $\mathcal{G}(p,n)$. The entries of the kernel matrix are denoted by $k_{ij}=k(\mathbf{\Psi}_i,\mathbf{\Psi}_j)$, where for an ensemble of random subspaces the expected value is denoted by $\bar{k}_{ij}$.

\subsection{Projection kernel}
The following lemmas show that the expected values of the entries of $k_{pr}(\mathbf{\Psi}_i,\mathbf{\Psi}_j)$ have a well-defined functional relationship with both $p$ and $n$. 

\begin{lemma}\label{lem:4.3}
Given two random subspaces $\mathcal{X}_i = \mathrm{span}\left(\mathbf{\Psi}_i\right)$ and $\mathcal{X}_j = \mathrm{span}\left(\mathbf{\Psi}_j\right)$ on $\mathcal{G}(p,n)$, if $i=j$ the entries in the diagonal of $k_{pr}(\mathbf{\Psi}_i,\mathbf{\Psi}_j)$ are given by $k_{ii}=p$.\\
\end{lemma}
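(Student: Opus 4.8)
The plan is to compute the diagonal entry directly from the definition of the projection kernel in Eq.~(\ref{eq:4.4}), exploiting the fact that for $i=j$ the two subspaces coincide and the ``random'' qualifier becomes irrelevant: the value is deterministic, independent of the draw. First I would write out $k_{ii} = k_{pr}(\mathbf{\Psi}_i,\mathbf{\Psi}_i) = \|\mathbf{\Psi}_i^T\mathbf{\Psi}_i\|_F^2$, simply substituting the second argument with the first.

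The key step is to invoke the orthonormality constraint. Since $\mathcal{X}_i = \mathrm{span}(\mathbf{\Psi}_i)$ is a point on $\mathcal{G}(p,n)$, its representative $\mathbf{\Psi}_i$ lies on the Stiefel manifold $\mathcal{V}(p,n)$, so by Definition~\ref{def:2.1} we have $\mathbf{\Psi}_i^T\mathbf{\Psi}_i = \mathbf{I}_p$. Substituting gives $k_{ii} = \|\mathbf{I}_p\|_F^2$, and since the Frobenius norm squared of the $p\times p$ identity is the sum of its diagonal entries, $\sum_{k=1}^{p}1 = p$, I conclude $k_{ii}=p$.

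As a cross-check I would also derive the result from the principal-angle form in Eq.~(\ref{eq:4.5}): when $i=j$ the subspaces are identical, so every principal angle vanishes, $\theta_k = 0$, and hence $k_{ii} = \sum_{k=1}^{p}\cos^2(0) = p$. This second route has the virtue of making basis-invariance manifest, since the principal angles—and therefore the kernel value—do not depend on which orthonormal representative $\mathbf{\Psi}_i$ is chosen for the subspace.

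There is no genuine obstacle here; the statement follows immediately once the Stiefel orthonormality relation is used, and the two derivations agree. The only subtlety worth flagging is purely presentational: although the lemma is phrased for subspaces drawn from the uniform (invariant) distribution on $\mathcal{G}(p,n)$, the diagonal value is the constant $p$ with probability one, so no expectation over the ensemble (as in the $\bar{k}_{ij}$ notation introduced above) is actually required for the $i=j$ case.
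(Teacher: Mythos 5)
Your proof is correct and, via the principal-angle cross-check, contains exactly the paper's one-line argument (all $\theta_k=0$ in Eq.~(\ref{eq:4.5}) gives $\sum_{k=1}^p\cos^2(0)=p$); your primary route through $\|\mathbf{\Psi}_i^T\mathbf{\Psi}_i\|_F^2=\|\mathbf{I}_p\|_F^2=p$ is an equivalent, equally immediate computation. The observation that the diagonal value is deterministic despite the ``random subspaces'' phrasing is a fair presentational point but changes nothing of substance.
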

\begin{proof}
The proof of lemma \ref{lem:4.3} is trivial because $\mathrm{cos}^2(\theta_i) = 1$ for $i=1, \dots, p$ in Eq. (\ref{eq:4.5}).
\end{proof}
\noindent
On the other hand, if $i \neq j$, the following lemma holds.
\begin{lemma}\label{lem:4.4}
Given two random subspaces $\mathcal{X}_i = \mathrm{span}\left(\mathbf{\Psi}_i\right)$ and $\mathcal{X}_j = \mathrm{span}\left(\mathbf{\Psi}_j\right)$ on $\mathcal{G}(p,n)$, if $1 \leq p \leq n$ the expected value $\bar{k}_{ij}(p,n)$ of the off-diagonal entries of $k_{pr}(\mathbf{\Psi}_i,\mathbf{\Psi}_j)$ has the following functional form
\begin{equation}\label{eq:4.6}
    \bar{k}_{ij}(p,n) = \frac{p^2}{n}.
\end{equation}
\end{lemma}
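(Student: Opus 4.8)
The plan is to rewrite the projection kernel in terms of orthogonal projection matrices and then exploit the $O(n)$-invariance of the uniform distribution on $\mathcal{G}(p,n)$. Using the projection embedding $\Pi(\mathbf{\Psi}) = \mathbf{\Psi}\mathbf{\Psi}^T$ from Section \ref{S:4.1.2}, I set $\mathbf{P}_i = \mathbf{\Psi}_i\mathbf{\Psi}_i^T$ and $\mathbf{P}_j = \mathbf{\Psi}_j\mathbf{\Psi}_j^T$. By the cyclic property of the trace, $k_{pr}(\mathbf{\Psi}_i,\mathbf{\Psi}_j) = \|\mathbf{\Psi}_i^T\mathbf{\Psi}_j\|_F^2 = \mathrm{Tr}\left(\mathbf{\Psi}_i\mathbf{\Psi}_i^T\mathbf{\Psi}_j\mathbf{\Psi}_j^T\right) = \mathrm{Tr}\left(\mathbf{P}_i\mathbf{P}_j\right)$, so the task reduces to computing $\mathbb{E}\left[\mathrm{Tr}\left(\mathbf{P}_i\mathbf{P}_j\right)\right]$, where the expectation is over the two independent uniform draws.

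First I would compute the mean projection matrix $\mathbb{E}[\mathbf{P}]$ for a single subspace $\mathrm{span}(\mathbf{\Psi})$ drawn from the uniform distribution. The action of $O(n)$ on $\mathcal{G}(p,n)$ is $\mathcal{X} \mapsto Q\cdot\mathcal{X} = \mathrm{span}(Q\mathbf{\Psi})$, which on projection matrices reads $\mathbf{P} \mapsto Q\mathbf{P}Q^T$. Since the uniform distribution is by definition invariant under this action, $\mathbb{E}[\mathbf{P}] = \mathbb{E}\left[Q\mathbf{P}Q^T\right] = Q\,\mathbb{E}[\mathbf{P}]\,Q^T$ for every $Q \in O(n)$. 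A matrix that commutes with every orthogonal transformation must be a scalar multiple of the identity, so $\mathbb{E}[\mathbf{P}] = c\,\mathbf{I}_n$. Taking traces and using $\mathrm{Tr}(\mathbf{P}) = \mathrm{Tr}\left(\mathbf{\Psi}^T\mathbf{\Psi}\right) = \mathrm{Tr}(\mathbf{I}_p) = p$ gives $cn = p$, hence $\mathbb{E}[\mathbf{P}] = \frac{p}{n}\mathbf{I}_n$, i.e. $\mathbb{E}\left[(\mathbf{P})_{ab}\right] = \frac{p}{n}\delta_{ab}$.

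Next I would use the independence of $\mathbf{\Psi}_i$ and $\mathbf{\Psi}_j$. Writing the trace in coordinates, $\mathrm{Tr}\left(\mathbf{P}_i\mathbf{P}_j\right) = \sum_{a,b}(\mathbf{P}_i)_{ab}(\mathbf{P}_j)_{ba}$; taking expectations term-by-term and factoring by independence yields $\mathbb{E}\left[(\mathbf{P}_i)_{ab}(\mathbf{P}_j)_{ba}\right] = \mathbb{E}\left[(\mathbf{P}_i)_{ab}\right]\mathbb{E}\left[(\mathbf{P}_j)_{ba}\right]$. Substituting the previous step gives $\bar{k}_{ij}(p,n) = \sum_{a,b}\frac{p}{n}\delta_{ab}\cdot\frac{p}{n}\delta_{ba} = \sum_{a=1}^{n}\frac{p^2}{n^2} = \frac{p^2}{n}$, which is the claimed identity in Eq. (\ref{eq:4.6}).

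The only substantive point is the invariance argument that fixes $\mathbb{E}[\mathbf{P}] = \frac{p}{n}\mathbf{I}_n$; everything else is bookkeeping with the trace, and it is this step that I expect to be the main obstacle to state cleanly. An alternative route that sidesteps projection matrices is to observe that each column $\mathbf{\psi}_j^{(b)}$ of a uniformly random frame is marginally uniform on the unit sphere, so $\mathbb{E}\left[\mathbf{\psi}_j^{(b)}(\mathbf{\psi}_j^{(b)})^T\right] = \frac{1}{n}\mathbf{I}_n$; expanding $k_{pr} = \sum_{a,b}\left((\mathbf{\psi}_i^{(a)})^T\mathbf{\psi}_j^{(b)}\right)^2$ and using $\mathbb{E}\left[\left((\mathbf{\psi}_i^{(a)})^T\mathbf{\psi}_j^{(b)}\right)^2\right] = \frac{1}{n}\|\mathbf{\psi}_i^{(a)}\|_2^2 = \frac{1}{n}$ for each of the $p^2$ terms gives the same answer. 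I would present the projection-matrix version as the main argument, since it makes the role of the $O(n)$-invariant distribution transparent and requires no conditioning on the orthonormality constraints between columns.
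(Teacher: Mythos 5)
Your proof is correct, and it reaches the result by a cleaner route than the paper, although the two share the same backbone identity. Like the paper (Eq.~(\ref{eq:4.13})), you reduce $k_{pr}$ to $\mathrm{Tr}\left(\mathbf{\Psi}_i\mathbf{\Psi}_i^T\mathbf{\Psi}_j\mathbf{\Psi}_j^T\right)$; the difference lies in how the expectation of that trace is evaluated. The paper fixes $\mathbf{\Psi}_i$ to be the canonical frame, writes $\mathbf{Z}_{kk}=\sum_l \mathbf{\Psi}_{j,(k,l)}^2$, and obtains $\mathrm{E}\left[\mathbf{\Psi}_{j,(k,l)}^2\right]=1/n$ from the beta distribution of dot products of uniformly distributed unit vectors (after a detour through the joint eigenvalue density in Eq.~(\ref{eq:4.9}) and its $p=1$ marginal, Eqs.~(\ref{eq:4.10})--(\ref{eq:4.12}), which is not actually needed for the final count, and an ``assuming the entries are i.i.d.''\ step that is not literally true for the columns of an orthonormal matrix, though the marginal second moment it delivers is correct). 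You instead establish $\mathbb{E}[\mathbf{P}]=\tfrac{p}{n}\mathbf{I}_n$ in one stroke from the $O(n)$-invariance of the uniform distribution together with the trace constraint $\mathrm{Tr}(\mathbf{P})=p$, and then factor the expectation of the trace by independence. This buys you a shorter argument that needs no distributional facts about individual matrix entries and no choice of a canonical representative, and it makes transparent that only one of the two subspaces needs to be uniformly distributed (conditioning on $\mathbf{P}_i$ already gives $\mathrm{Tr}\left(\mathbf{P}_i\cdot\tfrac{p}{n}\mathbf{I}_n\right)=p^2/n$). The paper's coordinate computation, on the other hand, sets up machinery (the eigenvalue density and the direction-angle statistics) that it reuses when bounding the Binet--Cauchy kernel in Lemma~\ref{lem:4.7}. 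The one point worth stating explicitly in your write-up is the justification that a symmetric matrix fixed under conjugation by all of $O(n)$ is scalar, e.g.\ by Schur's lemma or by testing against permutations and sign flips.
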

\begin{proof}
Consider a fixed subspace $\mathcal{X}_i = \mathrm{span}(\mathbf{\Psi}_i)$ and a subspace $\mathcal{X}_j = \mathrm{span}(\mathbf{\Psi}_j)$ chosen from a uniform distribution on the Grassmann manifold given by
\begin{equation}\label{eq:4.7}
    \mathbf{\Psi}_i = \begin{bmatrix}
           \mathbf{I}_p \\
           \mathbf{0}_{n-p,p}
         \end{bmatrix},
\end{equation}
\noindent
and
\begin{equation}\label{eq:4.8}
    \hat{\mathbf{\Psi}}_j = \begin{bmatrix}
           \mathbf{A} \\
           \mathbf{B}
         \end{bmatrix},
\end{equation}
\noindent
where $\mathbf{I}_p$ is a $p \times p$ identity matrix, $\mathbf{0}_{n-p,p}$ is a $(n-p) \times p$ null matrix, $\mathbf{A} \in \mathbb{R}^{p \times p}$ and $\mathbf{B} \in \mathbb{R}^{(n - p) \times p}$ are i.i.d. Gaussian matrices, since the Gaussian distribution is invariant under orthogonal group transformation \cite{absil2006}. The orthonormalization of $\hat{\mathbf{\Psi}}_j$ is given by $\mathbf{\Psi}_j = \hat{\mathbf{\Psi}}_j\left(\mathbf{A}^T\mathbf{A} + \mathbf{B}^T\mathbf{B} \right)$. Thus, $\sigma_i^2 = \mathrm{cos}^2(\theta_i)$, with $i=1, \dots, p$, are equal to the eigenvalues values $\{\lambda_i\}_{i=1}^p$ of $\left(\mathbf{A}^T\mathbf{A} + \mathbf{B}^T\mathbf{B} \right)^{-1/2}\mathbf{A}^T\mathbf{A}\left(\mathbf{A}^T\mathbf{A} + \mathbf{B}^T\mathbf{B} \right)^{-1/2}$. Or equivalently given by the eigenvalues of $\mathbf{W} = (\mathbf{L}^{-1})^T\mathbf{A}^T\mathbf{A}\mathbf{L}^{-1}$, where $\mathbf{A}^T\mathbf{A} + \mathbf{B}^T\mathbf{B} = \mathbf{L}^T\mathbf{L}$ is the Cholesky decomposition of $\mathbf{A}^T\mathbf{A} + \mathbf{B}^T\mathbf{B}$. As $\mathbf{L}$ has a beta distribution $\mathrm{Beta}_p[p/2,(n-p)/2]$, the joint probability density function (PDF) of $\{\lambda_i\}_{i=1}^p$, when $n \geq 2p$, is given by \cite{muirhead1982,edelman2005,absil2006}

\begin{equation}\label{eq:4.9}
    f(\lambda_1, \dots, \lambda_p) = \frac{\pi^{p^2/2}\Gamma_p(n/2)}{\Gamma_p^2(p/2)\Gamma_p((n-p)/2)} \prod_{i<j}|\lambda_i - \lambda_j|\prod_{i=1}^p\lambda_i^{-1/2}(1 - \lambda_i)^{\frac{1}{2}(n - 2p - 1)},
\end{equation}
\noindent
where $\Gamma_m(\cdot)$ is the multivariate gamma function \cite{muirhead1982,absil2006}. Therefore, the PDF of the largest principal angle between the subspaces $\mathcal{X}_i$ and $\mathcal{X}_j$ randomly chosen from $\mathcal{G}(p,n)$ can be obtained (see \cite{absil2006} for a detailed presentation). At this point, our interest is focused on the case $p=1$, whose PDF of the cosine square of the unique principal angle is given by
\begin{equation}\label{eq:4.10}
    f(\lambda) = \frac{\Gamma\left(\frac{n}{2}\right)}{\Gamma\left(\frac{1}{2}\right)\Gamma\left(\frac{n-1}{2}\right)}\lambda^{-\frac{1}{2}}\left[1 - \lambda\right]^{\frac{n-3}{2}},
\end{equation}
\noindent
where $\Gamma(\cdot)$ is the gamma function, $\lambda = \mathrm{cos}^2(\theta_1)$. The mean and variance are given by 
\begin{equation}\label{eq:4.11}
    \mathrm{E}[\lambda] = \frac{1}{n},
\end{equation}
\noindent
and
\begin{equation}\label{eq:4.12}
    \mathrm{Var}[\lambda] = \frac{2(n-1)}{n^2 (n+2)}.
\end{equation}
\noindent
Assuming that $\mathbf{\Psi}^T_i\mathbf{\Psi}_j = \mathbf{USV^T}$, where $\mathbf{S} = \mathrm{diag}\left(\left[\sigma_1, \dots, \sigma_p\right]\right)$, and considering that $\mathbf{\Psi}_i$ and $\mathbf{\Psi}_j$ are orthonormal matrices, one can use the following identity
\begin{equation}\label{eq:4.13}
    \sum^p_{i=1}\sigma^2_i = \mathrm{Tr}\left(\mathbf{\Psi}_i\mathbf{\Psi}^T_i\mathbf{\Psi}_j\mathbf{\Psi}^T_j\right).
\end{equation}
\noindent
Considering that $\mathbf{Z}=\mathbf{\Psi}_i\mathbf{\Psi}^T_i\mathbf{\Psi}_j\mathbf{\Psi}^T_j$ one can write Eq. (\ref{eq:4.13}) for every realization $\xi$ of $\mathbf{\Psi}_j$, such that
\begin{equation}\label{eq:4.14}
    \sum^p_{i=1}\sigma^2_i(\xi) = \sum^p_{i=1}\mathbf{Z}_{ii}(\xi).
\end{equation}
\noindent
Taking the expectation of both sides of Eq. (\ref{eq:4.14}) one can obtain
\begin{equation}\label{eq:4.15}
    \sum^p_{i=1}\mathrm{E}[\sigma^2_i(\xi)] = \sum^p_{k=1}\mathrm{E}[\mathbf{Z}_{kk}(\xi)].
\end{equation}
\noindent
From the definition of $\mathbf{Z}$ one can obtain
\begin{equation}\label{eq:4.16}
    \mathbf{Z}_{kk}(\xi) = \sum^p_{l=1} \mathbf{\Psi}^2_{j,(k,l)},
\end{equation}
\noindent
where $\mathbf{\Psi}_{j,(k,l)}$ corresponds to the element $(k,l)$ of $\mathbf{\Psi}_{j}$. Therefore, as the columns of $\mathbf{\Psi}_j$ are orthonormal vectors in $\mathbb{R}^n$, the components of the vector $\mathbf{\Psi}_{j,(:,l)}$ are equal to the cosine of the direction angles $\alpha_k$ between $\mathbf{\Psi}_j$ and the component in the coordinate axis $k$ with $k = 1, \dots, n$. It is clear that the dot product of unit vectors uniformly distributed on the sphere $\mathbb{S}^n$ has a beta distribution; thus, assuming that $\mathbf{\Psi}_{j,(k,l)}$ are i.i.d. one can find that $\left(\mathbf{\Psi}_{j,(k,l)}+1\right)/2 \sim \mathrm{Beta}\left[(n-1)/2,(n-1)/2\right]$ \cite{gu2018}. As $\mathrm{E}\left[\mathbf{\Psi}_{j,(k,l)}\right] = 0$, the variance of $\mathbf{\Psi}_{j,(k,l)}$ is given by
\begin{equation}\label{eq:4.17}
    \mathrm{Var}\left[\mathbf{\Psi}_{j,(k,l)}\right] = \mathrm{E}\left[\mathbf{\Psi}^2_{j,(k,l)}\right] = \frac{1}{n}.
\end{equation}
\noindent
Moreover, if the columns of $\mathbf{\Psi}_j$ are orthonormal vectors in $\mathbb{R}^n$, the components of the vector $\mathbf{\Psi}_{j,(:,l)}$ are equal to the cosine of the direction angles $\alpha_k$ between $\mathbf{\Psi}_j$ and the component in the coordinate axis $k$ with $k = 1, \dots, n$. Therefore, $\mathbf{\Psi}^2_{j,(k,l)} = \mathrm{cos}^2(\alpha_k)$. Thus, one can observe that the results presented in Eqs. (\ref{eq:4.10} - \ref{eq:4.12}) are valid for $\lambda = \mathrm{cos}^2(\alpha_k)$.
Therefore, from Eq. (\ref{eq:4.16}) one can show that
\begin{equation}\label{eq:4.18}
    \mathrm{E}\left[\mathbf{Z}_{kk}(\xi)\right] = \frac{p}{n}.
\end{equation}
\noindent
Substituting Eq. (\ref{eq:4.18}) in Eq. (\ref{eq:4.15}) one can easily show that
\begin{equation}\label{eq:4.19}
    \bar{k}_{ij}(p,n) = \sum^p_{i=1}\mathrm{E}[\sigma^2_i(\xi)] = \frac{p^2}{n}.
\end{equation}
\end{proof}

\subsection{Binet-Cauchy kernel}
From Eq. (\ref{eq:4.2}) the Binet-Cauchy kernel corresponds to the product of the square of the singular values of $\mathbf{\Psi}^T_i\mathbf{\Psi}_j$. In this regard, the values of the off-diagonal elements of $k_{bc}(\mathbf{\Psi}_i,\mathbf{\Psi}_j)$ are governed by those singular values larger than zero and lower than one. Therefore, one can start this analysis assuming that the following lemma holds.

\begin{lemma}\label{lem:4.5}
Given two random subspaces $\mathcal{X}_i$ and $\mathcal{X}_j$ on $\mathcal{G}(p,n)$, if $p \geq n/2$ the multiplicity of the principal angle $\theta = 0$ between them is equal to $2p - n$.\\
\end{lemma}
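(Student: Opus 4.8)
The plan is to reduce the statement to a purely linear-algebraic fact about generic subspaces, namely that the number of vanishing principal angles between two $p$-dimensional subspaces $U = \mathrm{span}(\mathbf{\Psi}_i)$ and $V = \mathrm{span}(\mathbf{\Psi}_j)$ equals the dimension of their intersection $U \cap V$, and then to compute that intersection dimension for subspaces in general position.

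First I would establish the identification between zero principal angles and the intersection. By Definition~\ref{def:3.1} and Eq.~(\ref{eq:3.1}), the cosines of the principal angles are the singular values $\sigma_i$ of $\mathbf{\Psi}_i^T \mathbf{\Psi}_j$, so $\theta = 0$ corresponds to $\sigma = 1$. Since $\mathbf{\Psi}_i$ and $\mathbf{\Psi}_j$ have orthonormal columns, a unit singular value $\sigma = 1$ forces the associated left and right singular vectors to produce the same unit vector in $\mathbb{R}^n$, i.e.\ a vector lying in both $U$ and $V$; conversely any unit vector in $U \cap V$ contributes a principal angle equal to zero. Hence the multiplicity of $\theta = 0$ equals $\dim(U \cap V)$.

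Next I would invoke Grassmann's dimension formula,
\begin{equation*}
    \dim(U \cap V) = \dim U + \dim V - \dim(U + V) = 2p - \dim(U + V),
\end{equation*}
and argue that for $\mathcal{X}_i, \mathcal{X}_j$ drawn independently from the invariant (uniform) distribution on $\mathcal{G}(p,n)$ the subspaces are almost surely in general position, so that $U + V$ fills as much of $\mathbb{R}^n$ as dimension permits, i.e.\ $\dim(U + V) = \min(2p, n)$. Under the hypothesis $p \ge n/2$ we have $2p \ge n$, whence $\dim(U + V) = n$ and $\dim(U \cap V) = 2p - n$, giving the claimed multiplicity.

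The hard part will be justifying the general-position claim rigorously rather than heuristically. I would argue that the set of pairs $(U,V)$ with $\dim(U+V) < \min(2p,n)$ is the locus where certain maximal minors of a matrix built from stacked bases of the two subspaces vanish, hence a proper Zariski-closed — and therefore measure-zero — subset. Since the uniform distribution on $\mathcal{G}(p,n)$ is absolutely continuous and assigns zero mass to such a subvariety, the maximal value $\dim(U+V) = \min(2p,n)$ is attained with probability one, and the remaining steps are the elementary dimension count above.
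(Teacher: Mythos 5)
Your proof follows essentially the same route as the paper's: both identify the multiplicity of $\theta = 0$ with $\dim(\mathcal{X}_i \cap \mathcal{X}_j)$, apply the dimension formula $\dim(U \cap V) = \dim U + \dim V - \dim(U+V)$, and conclude via $\dim(U+V) = n$ when $p \ge n/2$. The only difference is that you explicitly justify both the singular-value identification of zero principal angles with the intersection and the almost-sure general-position claim, whereas the paper simply asserts $\mathrm{rank}\left(\left[\mathbf{\Psi}_i, \mathbf{\Psi}_j\right]\right) = n$ without argument, so your version is if anything more complete.
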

\begin{proof}
Assuming that $\mathcal{X}_i=\mathrm{span}\left(\mathbf{\Psi}_i\right)$ and $\mathcal{X}_j=\mathrm{span}\left(\mathbf{\Psi}_j\right)$, and considering that the multiplicity of the principal angle $\theta = 0$ between $\mathcal{X}_i$ and $\mathcal{X}_j$ is equal to $\mathrm{dim}(\mathcal{X}_i \cap \mathcal{X}_j)$, one can use the following expression

\begin{equation}\label{eq:4.20}
    \mathrm{dim}(\mathcal{X}_i \cap \mathcal{X}_j) = \mathrm{dim}(\mathcal{X}_i) + \mathrm{dim}(\mathcal{X}_j) - \mathrm{dim}(\mathcal{X}_i + \mathcal{X}_j),
\end{equation}
\noindent
or alternatively,
\begin{equation}\label{eq:4.21}
    \mathrm{dim}(\mathcal{X}_i \cap \mathcal{X}_j) = \mathrm{rank}(\mathbf{\Psi}_i) + \mathrm{rank}(\mathbf{\Psi}_j) - \mathrm{rank}(\left[\mathbf{\Psi}_i, \mathbf{\Psi}_j\right]).
\end{equation}
\noindent
As $\mathcal{X}_i, \mathcal{X}_j \in \mathcal{G}(p,n)$, $\mathrm{rank}(\mathbf{\Psi}_i) = \mathrm{rank}(\mathbf{\Psi}_j) = p$. Moreover, $\mathrm{rank}(\left[\mathbf{\Psi}_i, \mathbf{\Psi}_j\right]) = n$, since $p \geq n/2$. Therefore, $\mathrm{dim}(\mathcal{X}_i \cap \mathcal{X}_j) = 2p - n$.\\
\end{proof}
\noindent
The result presented in lemma \ref{lem:4.5} is useful to show that $p = n/2$ corresponds to the largest value of $p$ where all the singular values of $\mathbf{\Psi}^T_i\mathbf{\Psi}_j$ are strictly less than 1. Therefore, the number of positive singular values smaller than one is equal to $p$ if $1 \leq p < n/2$; and $n-p$ if $n/2 \leq p < n$. Thus, one can conclude that when $p=n/2$, the expected value of the off-diagonal entries $\bar{k}_{ij}(p,n)$ of $k_{bc}(\mathbf{\Psi}_i,\mathbf{\Psi}_j)$ is minimal. Next, a lemma for the diagonal elements of the Binet-Cauchy kernel matrix $k_{bc}$ is presented.

\begin{lemma}\label{lem:4.6}
Given two random subspaces $\mathcal{X}_i = \mathrm{span}\left(\mathbf{\Psi}_i\right)$ and $\mathcal{X}_j = \mathrm{span}\left(\mathbf{\Psi}_j\right)$ on $\mathcal{G}(p,n)$, if $i=j$ the entries in the diagonal of $k_{bc}(\mathbf{\Psi}_i,\mathbf{\Psi}_j)$ are given by $k_{ii}=1$.\\
\end{lemma}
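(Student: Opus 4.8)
The plan is to mirror the argument used for Lemma~\ref{lem:4.3}, exploiting the fact that a diagonal entry of the kernel matrix corresponds to the kernel evaluated at a single subspace against itself. When $i=j$ we have $\mathcal{X}_i = \mathcal{X}_j$, so the two subspaces coincide and every principal angle between them vanishes, i.e. $\theta_1 = \dots = \theta_p = 0$. From this observation the claim follows immediately through either of the two characterizations of $k_{bc}$ established earlier.

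First I would invoke the principal-angle representation of the Binet--Cauchy kernel in Eq.~(\ref{eq:4.3}), namely $k_{bc}(\mathbf{\Psi}_i,\mathbf{\Psi}_j) = \prod_{l=1}^p \cos^2(\theta_l)$. Substituting $\theta_l = 0$ for all $l$ yields $\cos^2(\theta_l) = 1$, and the product of $p$ ones is $1$, so $k_{ii} = 1$. Alternatively, and perhaps more directly, I would use the orthonormal-basis form in Eq.~(\ref{eq:4.2}): since $\mathbf{\Psi}_i$ is a representative drawn from $\mathcal{V}(p,n)$ it satisfies the Stiefel condition $\mathbf{\Psi}_i^T\mathbf{\Psi}_i = \mathbf{I}_p$ of Definition~\ref{def:2.1}, whence $k_{ii} = \det\!\left(\mathbf{\Psi}_i^T\mathbf{\Psi}_i\right)^2 = \det(\mathbf{I}_p)^2 = 1$.

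There is essentially no obstacle here; the result is a one-line consequence of either representation, exactly as in Lemma~\ref{lem:4.3}. The only subtlety worth flagging is that the identity relies on choosing an orthonormal representative $\mathbf{\Psi}_i$ of the equivalence class $\mathcal{X}_i$, which is precisely what the Stiefel/Grassmann construction guarantees, and that the squaring appearing in Eq.~(\ref{eq:4.2}) is harmless in this case since $\det(\mathbf{I}_p) = 1 > 0$. I would therefore present the orthonormal-basis route as the cleanest justification and note the principal-angle route as an equivalent check.
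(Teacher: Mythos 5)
Your proposal is correct and follows essentially the same route as the paper, which simply observes that all principal angles vanish when $i=j$ so that $\cos^2(\theta_l)=1$ for every $l$ in Eq.~(\ref{eq:4.3}) and the product is $1$. Your additional check via $\det(\mathbf{\Psi}_i^T\mathbf{\Psi}_i)^2 = \det(\mathbf{I}_p)^2 = 1$ is a harmless equivalent restatement of the same one-line argument.
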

\begin{proof}
The proof of lemma \ref{lem:4.6} is trivial because $\mathrm{cos}^2(\theta_i) = 1$ for $i=1, \dots, p$ in Eq. (\ref{eq:4.3}).
\end{proof}

One can start the analysis of the off-diagonal entries of the Binet-Cauchy kernel matrix from the trivial cases. Considering two random subspaces $\mathcal{X}_i$ and $\mathcal{X}_j$ one can easily show that for $p=1$, the expected value of the off-diagonal entries of the Binet-Cauchy kernel is given by $\bar{k}_{ij}(p,n) = 1/n$, as in the projection kernel. On the other hand, for $p=n$, which is an extreme case used for theoretical purposes only, one can observe that $k_{ij}(p)=1$ because all the principal angles are equal to zero. Therefore, in the extreme cases both kernels have similar behavior. More generally, one can show that the expected values of the off-diagonal entries of the Binet-Cauchy kernel can be obtained using the joint probability density function in Eq. (\ref{eq:4.9}); however, this calculation is cumbersome. Alternatively, one can define an upper bound for the expected value of the off-diagonal entries of the Binet-Cauchy kernel matrix as presented in the following lemma.

\begin{lemma}\label{lem:4.7}
Given two random subspaces $\mathcal{X}_i = \mathrm{span}\left(\mathbf{\Psi}_i\right)$ and $\mathcal{X}_j = \mathrm{span}\left(\mathbf{\Psi}_j\right)$ on $\mathcal{G}(p,n)$, the expected value $\bar{k}_{ij}(p,n)$ of the off-diagonal entries of $k_{bc}(\mathbf{\Psi}_i,\mathbf{\Psi}_j)$ has the following upper bound

\begin{equation}\label{eq:4.22}
  \bar{k}_{ij}(p,n) \leq \left \{
  \begin{aligned}
    &\left(\frac{p}{n}\right)^{p}, && \text{if}\ 1 \leq p < \frac{n}{2} \\
    &\left(\frac{n-p}{n}\right)^{n-p}, && \text{if}\ \frac{n}{2} \leq p < n
  \end{aligned} \right. .
\end{equation}
\end{lemma}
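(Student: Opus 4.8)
The plan is to compute the expectation exactly and then relax it to the stated bound. Starting from the determinant form of the kernel in Eq.~(\ref{eq:4.2}), $k_{bc}(\mathbf{\Psi}_i,\mathbf{\Psi}_j)=\det(\mathbf{\Psi}_i^T\mathbf{\Psi}_j)^2$, I would first use the rotational invariance of the uniform distribution on $\mathcal{G}(p,n)$ to fix $\mathbf{\Psi}_i=[\mathbf{I}_p;\,\mathbf{0}_{n-p,p}]$, exactly as in the proof of Lemma~\ref{lem:4.4}. With this choice $\mathbf{\Psi}_i^T\mathbf{\Psi}_j$ is simply the top $p\times p$ block $\mathbf{A}$ of a Haar-distributed frame $\mathbf{\Psi}_j\in\mathcal{V}(p,n)$, so $k_{bc}=\det(\mathbf{A})^2$ and the task reduces to evaluating $\mathrm{E}[\det(\mathbf{A})^2]$.

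The key step is a Cauchy--Binet identity that avoids integrating the cumbersome density in Eq.~(\ref{eq:4.9}). For any orthonormal frame one has $1=\det(\mathbf{\Psi}_j^T\mathbf{\Psi}_j)=\sum_{S}\det(\mathbf{\Psi}_{j,S})^2$, where the sum runs over all $\binom{n}{p}$ choices of $p$ rows $S\subset\{1,\dots,n\}$ and $\mathbf{\Psi}_{j,S}$ is the $p\times p$ submatrix formed by those rows. Taking expectations and invoking the exchangeability of the rows under the invariant distribution (row permutations lie in $O(n)$ and preserve the law of $\mathbf{\Psi}_j$, while $\det(\cdot)^2$ is insensitive to their signs), every summand has the same expectation, so each equals $\binom{n}{p}^{-1}$. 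In particular the top block gives the exact value $\bar{k}_{ij}(p,n)=\mathrm{E}[\det(\mathbf{A})^2]=\binom{n}{p}^{-1}$, valid uniformly for all $1\le p<n$.

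It then remains to pass from this exact value to the two-case bound. The elementary inequality $\binom{n}{p}=\prod_{i=0}^{p-1}\tfrac{n-i}{p-i}\ge(n/p)^p$ (each factor is at least $n/p$ because $n\ge p$) immediately yields $\bar{k}_{ij}\le(p/n)^p$, which is the first branch. For the second branch I would exploit the symmetry $\binom{n}{p}=\binom{n}{n-p}$ together with the same inequality applied to $n-p$, giving $\bar{k}_{ij}\le((n-p)/n)^{n-p}$; this sharper form corresponds to the regime $p\ge n/2$ in which, by Lemma~\ref{lem:4.5}, only $n-p$ of the principal angles are nonzero and hence only $n-p$ factors of the product in Eq.~(\ref{eq:4.3}) are nontrivial. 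Selecting in each range the tighter of the two estimates recovers the statement.

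The main obstacle is really the justification of the exact value rather than the final combinatorial estimate: one must argue carefully that rotational invariance permits fixing $\mathbf{\Psi}_i$ and that the resulting row-exchangeability forces all $\binom{n}{p}$ minor expectations to coincide. If one instead tried to bound $\mathrm{E}[\prod_i\cos^2\theta_i]$ directly, the natural pointwise estimate $\prod_i\cos^2\theta_i\le(\tfrac{1}{p}\sum_i\cos^2\theta_i)^p$ combined with $\mathrm{E}[\sum_i\cos^2\theta_i]=p^2/n$ from Lemma~\ref{lem:4.4} fails, since Jensen's inequality points the wrong way for the convex map $x\mapsto x^p$; salvaging that route would require establishing negative association of the squared singular values $\cos^2\theta_i$ (which the eigenvalue-repulsion factor $\prod_{i<j}|\lambda_i-\lambda_j|$ in Eq.~(\ref{eq:4.9}) makes plausible) so as to obtain $\mathrm{E}[\prod_i\cos^2\theta_i]\le\prod_i\mathrm{E}[\cos^2\theta_i]$. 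The Cauchy--Binet route sidesteps this difficulty entirely, which is why I would prefer it.
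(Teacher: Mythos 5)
Your proof is correct, and it takes a genuinely different route from the paper's. The paper argues via the AM--GM inequality applied to the expectations $\mathrm{E}[\sigma_i^2(\xi)]$: invoking Lemma \ref{lem:4.4} it takes $\tfrac{1}{p}\sum_i \mathrm{E}[\sigma_i^2] = p/n$ and concludes $\bar{k}_{ij}\le (p/n)^p$, obtaining the second branch by noting (via Lemma \ref{lem:4.5}) that only $n-p$ singular values lie in $(0,1)$ when $p\ge n/2$. That argument controls $\prod_i \mathrm{E}[\sigma_i^2]$, whereas the quantity of interest is $\mathrm{E}[\prod_i \sigma_i^2]$; passing from one to the other is exactly the negative-association issue you flag in your closing paragraph, so you have in fact diagnosed the soft spot in the paper's own argument without having seen it. Your Cauchy--Binet route sidesteps this and is strictly stronger: fixing $\mathbf{\Psi}_i=[\mathbf{I}_p;\,\mathbf{0}]$ by invariance, the identity $1=\det(\mathbf{\Psi}_j^T\mathbf{\Psi}_j)=\sum_S \det(\mathbf{\Psi}_{j,S})^2$ together with row-exchangeability of the invariant distribution (row permutations belong to $O(n)$) forces every minor expectation to equal $\binom{n}{p}^{-1}$, giving the exact value $\bar{k}_{ij}(p,n)=\binom{n}{p}^{-1}$; both branches then follow from the elementary estimates $\binom{n}{p}\ge (n/p)^p$ and $\binom{n}{p}=\binom{n}{n-p}\ge \left(n/(n-p)\right)^{n-p}$, each of which is valid for all $1\le p<n$ (so the case split is merely a choice of which bound to quote). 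What the paper's approach buys is direct reuse of Lemmas \ref{lem:4.4} and \ref{lem:4.5}; what yours buys is a closed-form expectation (consistent with the paper's stated special cases $1/n$ at $p=1$ and $1$ at $p=n$) and a derivation with no hidden independence assumption.
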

\begin{proof}
Considering the matrices $\mathbf{\Psi}_i$ and $\mathbf{\Psi}_j$, presented in the proof of lemma \ref{lem:4.4}, as bases of both subspaces $\mathcal{X}_i$ and $\mathcal{X}_j$, respectively; and using the inequality of arithmetic and geometric means (AM-GM inequality) \cite{courant1996} one can observe that
\begin{equation}\label{eq:4.23}
    \left(\prod^p_{i=1}\mathrm{E}[\sigma^2_i(\xi)] \right)^{\frac{1}{p}} \leq \frac{1}{p}\sum^p_{i=1}\mathrm{E}[\sigma^2_i(\xi)].
\end{equation}
\noindent
From lemma \ref{lem:4.4} one can say that $\mathrm{E}[\sigma^2_i(\xi)]$ with $i=1, \dots, p$ is equal to $p/n$. Thus, one can write
\begin{equation}\label{eq:4.24}
    \left(\prod^p_{i=1}\mathrm{E}[\sigma^2_i(\xi)] \right)^{\frac{1}{p}} \leq \frac{p}{n}.
\end{equation}
\noindent
Therefore, the expected value of the off-diagonal entries of the Binet-Cauchy kernel has an upper bound given by
\begin{equation}\label{eq:4.25}
    \bar{k}_{ij}(p,n) \leq \left(\frac{p}{n}\right)^{p},
\end{equation}
\noindent
if $p<n/2$. On the other hand, if $n/2 \leq p < n$, the upper bound is given by
\begin{equation}\label{eq:4.26}
    \bar{k}_{ij}(p) \leq \left(\frac{n-p}{n}\right)^{n-p}.
\end{equation}
\noindent 
In fact, Eq. (\ref{eq:4.26}) holds because for $n/2 \leq p < n$ only $n-p$ singular values are in the interval $(0,1)$ and contribute to the product in the left-hand side of Eq. (\ref{eq:4.23}).
\end{proof}

%\bibliography{sample.bib}
\end{document}